\PassOptionsToPackage{cleveref}{jmlrutils}
\PassOptionsToPackage{capitalize}{cleveref}
\documentclass{neus2025}
\jmlryear{2026}
\jmlrproceedings{}{Preprint}
\jmlrvolume{}
\jmlrpages{}
\jmlrworkshop{Accepted at NeuS 2026}

\title[Bridging Learned Visual Perception and Symbolic Belief-Space Planning]{Bridging Learned Visual Perception and Symbolic Belief-Space Planning via Probabilistic Grounding}
\usepackage{times}

\author{%
 \Name{Guy Azran} \Email{guy.azran@campus.technion.ac.il}\\
 \addr Taub Faculty of Computer Science, Technion - Israel Institute of Technology
 \AND
 \Name{Michael Navat} \Email{michaelnavat@campus.technion.ac.il}\\
 \addr Faculty of Mathematics, Technion - Israel Institute of Technology
 \AND
 \Name{Sarah Keren} \Email{sarahk@cs.technion.ac.il}\\
 \addr Taub Faculty of Computer Science, Technion - Israel Institute of Technology
}

\usepackage[nohyperlinks]{acronym}
\makeatletter
\renewcommand*\AC@verridelabel[1]{%
  \@bsphack
  \protected@write\@auxout{}{\string\AC@undonewlabel{#1}}%
  \protected@write\@auxout{}{\string\AC@undonewlabel{#1@cref}}%
  \label{#1}%
  \AC@overriddenmessage rs{#1}%
  \AC@overriddenmessage rs{#1@cref}%
  \@esphack
}
\makeatother

\usepackage{bbm}  %
\usepackage{comment}
\usepackage{algorithm}
\usepackage{algorithmic}
\usepackage{wrapfig}
\usepackage{caption}
\usepackage{booktabs}

\hypersetup{
  pdftitle={Bridging Learned Visual Perception and Symbolic Belief-Space Planning via Probabilistic Grounding},
  pdfauthor={Guy Azran, Michael Navat, Sarah Keren}
}

\newtheorem{claim}{Claim}

\DeclareMathOperator*{\argmax}{arg\,max}

\usepackage{xspace}

\newcommand{\tuple}[1]{\ensuremath{\left\langle #1 \right\rangle}\xspace}

\newcommand{\vlmFn}{\ensuremath{\phi}\xspace}
\newcommand{\token}{\ensuremath{v}\xspace}
\newcommand{\vocabSet}{\ensuremath{\mathcal{V}}\xspace}
\newcommand{\prompt}{\ensuremath{x}\xspace}
\newcommand{\calibError}{\ensuremath{\delta}\xspace}
\newcommand{\minPredImp}{\ensuremath{\gamma}\xspace}
\newcommand{\minVisRate}{\ensuremath{\rho}\xspace}
\newcommand{\finiteStep}{\ensuremath{T}\xspace}
\newcommand{\weightProb}{\ensuremath{W}\xspace}
\newcommand{\error}{\ensuremath{\varepsilon}\xspace}

\newcommand{\logOdds}{\ensuremath{L}\xspace}
\newcommand{\logObs}{\ensuremath{l}\xspace}

\newcommand{\fluentSet}[0]{\ensuremath{F}\xspace}
\newcommand{\initState}[0]{\ensuremath{I}\xspace}
\newcommand{\opSet}[0]{\ensuremath{A}\xspace}
\newcommand{\goalState}[0]{\ensuremath{G}\xspace}
\newcommand{\stripsTup}[0]{\tuple{\fluentSet, \initState, \opSet, \goalState}}

\newcommand{\fluent}[0]{\ensuremath{f}\xspace}
\newcommand{\op}[0]{\ensuremath{a}\xspace}

\newcommand{\beliefSet}{\ensuremath{b}\xspace}
\newcommand{\initBeliefSet}[0]{\ensuremath{\beliefSet^\initState}\xspace}
\newcommand{\beliefState}[0]{\ensuremath{\beta}\xspace}
\newcommand{\initBeliefState}[0]{\ensuremath{\beta^\initState}\xspace}

\newcommand{\factoredBelief}[0]{\ensuremath{\beliefState_\fluentSet}\xspace}

\newcommand{\probThresh}{\ensuremath{\theta}\xspace}
\newcommand{\sensorModel}{\ensuremath{M}\xspace}

\newcommand{\prob}{\ensuremath{p}\xspace}
\newcommand{\constraintSet}{\ensuremath{\mathcal{C}}\xspace}
\newcommand{\normalizer}{\ensuremath{Z}\xspace}
\newcommand{\unconstBelief}{\ensuremath{\tilde{\beliefState}}\xspace}

\newcommand{\policy}{\ensuremath{\pi}\xspace}

\newcommand{\obsSpace}[0]{\ensuremath{O}\xspace}
\newcommand{\obs}[0]{\ensuremath{o}\xspace}
\newcommand{\stateSpace}[0]{\ensuremath{S}\xspace}

\newcommand{\state}[0]{\ensuremath{s}\xspace}

\newcommand{\seFn}[0]{\ensuremath{\sigma}\xspace}

\newcommand{\actionMapper}[0]{\ensuremath{\xi}\xspace}
\newcommand{\workspace}[0]{\ensuremath{W}\xspace}
\newcommand{\wsConfig}[0]{\ensuremath{w}\xspace}

\newcommand{\beliefSize}[0]{\ensuremath{k}\xspace}

\newcommand{\queue}[0]{\ensuremath{Q}\xspace}
\newcommand{\queueExtracted}[0]{\ensuremath{\stateSpace^{\text{ext}}}\xspace}

\newcommand{\plan}[0]{\ensuremath{\Pi}\xspace}

\newcommand{\indicator}[0]{\ensuremath{\mathbbm{1}}\xspace}

\usepackage{listings}
\begin{document}

\maketitle

\begin{abstract}
  In partially observable settings, agents must act without full knowledge of the world state and rely on uncertain state-estimation pipelines.
Obtaining grounded and verifiable symbolic plans under such uncertainty remains a key challenge.
Recent work has integrated \acp{vlm} to bridge perception and symbolic reasoning, following two main paradigms. The first, \ac{vlm}-as-planner, maps images directly to action sequences, and the second, \ac{vlm}-as-grounder, grounds observations into symbolic predicates used as the initial state by off-the-shelf planners. Both approaches ignore uncertainty in the planning process, compromising robustness.
We introduce a third paradigm, \ac{vlm}-as-probabilistic-grounder, a novel approach that captures the uncertainty of \ac{vlm} predicate groundings as a probability distribution over symbolic states.
This enables planning in belief space and producing robust plans under uncertainty.
Experiments in simulated household robot settings show improved robustness and task success over deterministic grounding, underscoring how our approach leverages foundation models for reliable planning under uncertainty.

\end{abstract}

\begin{keywords}
  planning under uncertainty, vision-language models, robust decision-making
\end{keywords}

\acresetall

\section{Introduction}

\begin{wrapfigure}{r}{0.5\linewidth}
    \centering
    \subfigure[``Book to shelf'']{
        \label{fig:example:radio}
        \includegraphics[width=0.45\linewidth]{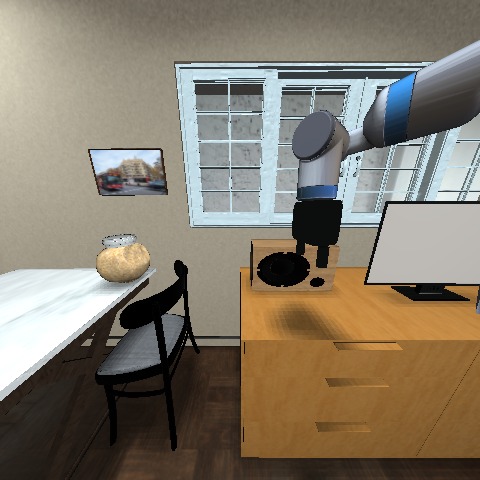}
    }\hfill
    \subfigure[``Bowl to sink'']{
        \label{fig:example:kitchen}
        \includegraphics[width=0.45\linewidth]{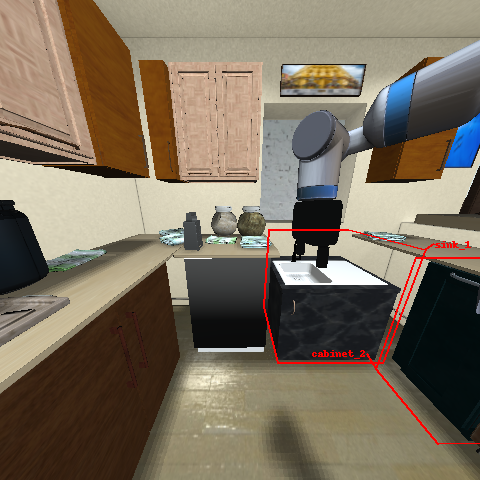}
    }
    \caption{Observations for two tasks in the ViPlan household benchmark \citep{merlerViPlanBenchmarkVisual2025}.}
    \label{fig:example}
    \vspace{-1\baselineskip}
\end{wrapfigure}

Robotic applications remain challenging due to the inherent uncertainty in perception and action outcomes.
Many traditional frameworks adopt the \acl{cwa} \citep{reiterCLOSEDWORLDDATA1981}, treating facts not known to be true as false, and rely on handcrafted, task-specific solutions \citep{wertheimPlugnPlayTaskLevel2024,morenoCombinedTaskMotion2024,ranaSayPlanGroundingLarge2023,garrettPDDLStreamIntegratingSymbolic2020}.
Recently, \acp{vlm} have been incorporated to infer task-relevant information from visual observations, enabling more generalizable and flexible planning systems \citep{zhangDKPROMPTDomainKnowledge2024,huLookYouLeap2023}. \citet{merlerViPlanBenchmarkVisual2025} distinguish between two paradigms for \ac{vlm}-planning integration: \ac{vlm}-as-planner, where the \ac{vlm} directly generates plans from visual inputs, and \ac{vlm}-as-grounder, where the \ac{vlm} provides symbolic predicates that an off-the-shelf planner uses to compute plans.
These approaches disregard the inherent uncertainty in visual grounding and treat \ac{vlm} predictions as deterministic.
This is ineffective in domains such as household robotics, where perception is noisy and ambiguous, and producing deterministic plans may fail when ambiguity and missing information lead to incorrect predicate assignments.

\cref{fig:example} shows two tasks that demonstrate planning uncertainty due to misleading \acs{vlm} predictions and partial observability.
In \cref{fig:example:radio}, the robot must place a book on the target shelf, which is outside the frame. A radio appears behind the robot's gripper, but the \ac{vlm} believes it is holding the book. The robot plans to navigate to the shelf, where it will realize its mistake, triggering a replan.
In \cref{fig:example:kitchen}, the robot must bring a hidden bowl to the sink. The \ac{vlm} has no indication that the bowl is in a cabinet, so it will repeatedly plan to navigate to the bowl that it cannot see and never complete the task.
The robot can overcome these challenges by maintaining a belief over possible world states (e.g., book in hand and not in hand, or bowl in cabinet or not in cabinet) and planning to solve the task accordingly.
This will enable the robot to solve the task with less replanning.

In line with this, we present \ac{sc}, which leverages a \ac{vlm} to guide a robust decision-making process rather than blindly trusting its output.
We use the \ac{vlm} to produce predicate probabilities to maintain an explicit belief over high-level states and frame the problem as \ac{cpp} \citep{domshlakFastProbabilisticPlanning2006}.
This produces more robust plans while retaining the interpretability improvements of symbolic grounding.
The contributions of our work are as follows:\vspace{-7pt}
\begin{enumerate}
\setlength{\itemsep}{-2pt}
    \item We introduce \ac{vlm}-as-probabilistic-grounder, a new \ac{vlm}-planning paradigm leveraging fluent-level probability instead of brittle deterministic grounding or direct action generation.
    \item We formalize the \ac{rvp} problem, providing a principled definition of planning under partial observability and perceptual uncertainty in visual robotic domains.
    \item We propose \ac{sc}, a robust visual task planner that maintains a symbolic belief derived from \ac{vlm}-based probabilities and compiles it into a \acf{cp} problem.
    \item We develop a theoretically grounded planning-execution loop with guarantees on correctness and safety, including conditions under which the \ac{vlm} is sufficiently accurate and useful.
\end{enumerate}\vspace{-10pt}

We evaluate our approach on the ViPlan-HH benchmark \citep{merlerViPlanBenchmarkVisual2025}, which contains multiple robot planning tasks in various home scenes. Our experiments demonstrate how \ac{sc}'s robust plans enable robots to solve complex tasks under uncertainty where other approaches fail.

\begin{figure}[t]
\begin{minipage}[t][][t]{0.42\textwidth}
    \centering
\fbox{\includegraphics[width=1\linewidth]{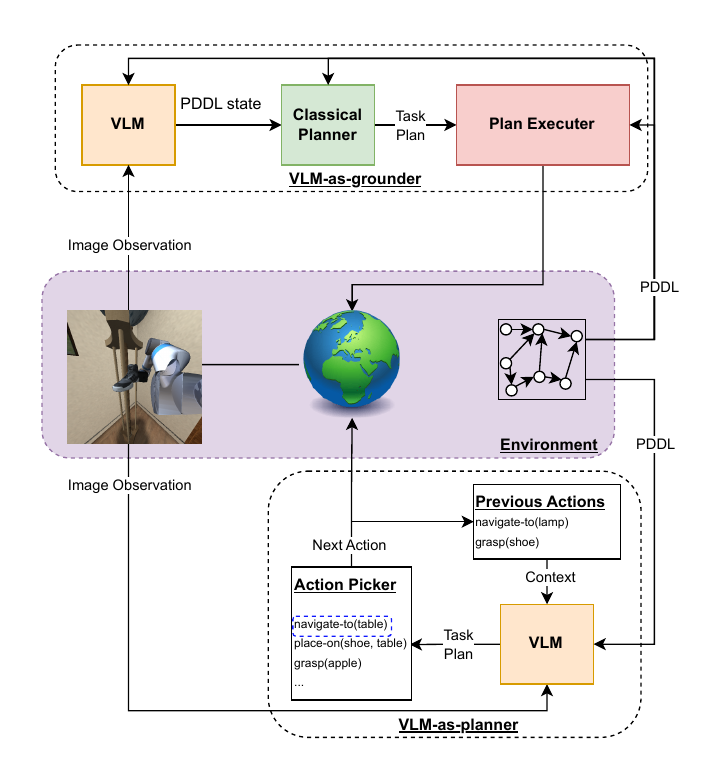}}
    \caption{ \ac{vlm}-as-grounder (top) and \ac{vlm}-as-planner (bottom)}
    \label{fig:methods:theirs}
\end{minipage}\hfill
\begin{minipage}[t][][t]{0.53\textwidth}
    \centering
    \fbox{\includegraphics[width=1\linewidth]{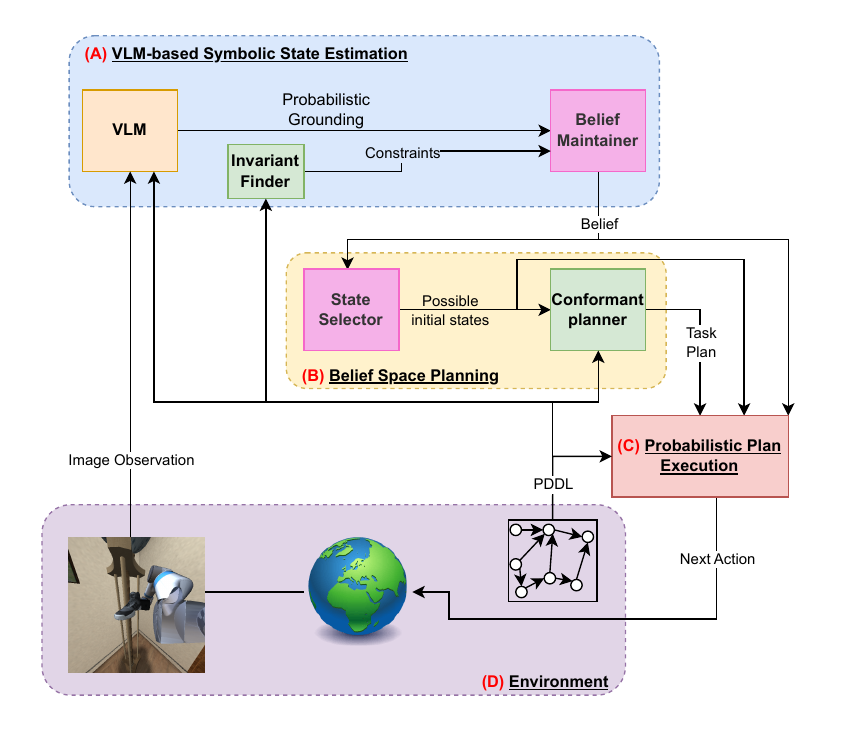}}
    \caption{\acf{sc} pipeline}
    \label{fig:methods:ours}
\end{minipage}
\end{figure}

\acresetall

\vspace{-7pt}

\section{Background and Related Work}

Our approach is based on harnessing the power of pre-trained \acp{vlm} to answer semantic queries about visual input \citep{liGroundedLanguageImagePretraining2022,radfordLearningTransferableVisual2021,liVisualBERTSimplePerformant2019}.
Let \obsSpace be a set of possible image observations, and let \vocabSet be a set of textual tokens called the vocabulary.
A \ac{vlm} is a function $\vlmFn: \obsSpace \times \vocabSet^* \to [0, 1]^{\vocabSet}$ that takes as input an image observation $\obs \in \obsSpace$ and a sequence of textual tokens (a prompt) $\prompt = \tuple{\token_1, \ldots, \token_n} \in \vocabSet^*$. Its output is a prediction for the next token in the sequence.
By iteratively inserting the predicted token into the prompt, the \ac{vlm} generates textual outputs conditioned on visual inputs.

Recent work in \ac{vla} robotics uses visual and language inputs either to produce low-level robot actions directly or to support task-level planning. End-to-end \ac{vla} policies map observations and natural language instructions to robot controls \citep{kimOpenVLAOpenSourceVisionLanguageAction2024,duanManipulateAnythingAutomatingRealWorld2024,jiangVIMAGeneralRobot2023}, while planning-loop approaches use \acp{vlm} as high-level planners or symbolic grounders \citep{ranaSayPlanGroundingLarge2023,zhangDKPROMPTDomainKnowledge2024,huLookYouLeap2023}. In this work, we focus on the latter.

\citet{merlerViPlanBenchmarkVisual2025} distinguish two paradigms for integrating a \ac{vlm} into a planning loop.
In \ac{vlm}-as-planner, depicted in \cref{fig:methods:theirs} (bottom), the model maps an image observation and task description directly to an action sequence \citep{yangGuidingLongHorizonTask2025,duanManipulateAnythingAutomatingRealWorld2024,huLookYouLeap2023}.
In \ac{vlm}-as-grounder, depicted in \cref{fig:methods:theirs} (top), the model assigns Boolean values to grounded task fluents, and a symbolic planner plans from the resulting state estimate \citep{azranS3ESemanticSymbolic2025,liangVisualPredicatorLearningAbstract2024,chenLLMStateOpenWorld2024,dingRobotTaskPlanning2024,zhangDKPROMPTDomainKnowledge2024}.
Both paradigms typically commit to a single \ac{vlm} output rather than propagating uncertainty over multiple plausible symbolic states.
Our approach instead extracts per-fluent probabilities, constructs a belief over symbolic states, selects states whose cumulative probability is at least \probThresh, and computes a conformant plan that is valid for every selected state.

These task-level approaches typically use a deterministic symbolic action model \citep{ghallabAutomatedPlanningTheory2004}. Grounder-based systems pass one grounded state to a classical planner, while planner-based systems generate an action sequence expressed in the same symbolic model.
Such models are commonly represented using the \acs{strips} formalism \citep{fikesStripsNewApproach1971}. \acs{strips} defines a planning problem as a tuple \stripsTup, where \fluentSet is a set of fluents representing the state of the world, \initState is the initial state, \opSet is a set of actions (operators) that have preconditions that determine when they are applicable and can change the state via their effects, and $\goalState \subseteq \fluentSet$ is the goal condition.
While it is common to use classical planning algorithms and replanning upon unexpected outcomes \citep{yoonFFReplanBaselineProbabilistic2007}, this is highly ineffective in settings in which replanning is costly or even impossible.
For example, a robot may need to communicate with an external computation source to plan, taking a long time due to bad connectivity or even failing when there is no connectivity.

On the other end of the spectrum, \emph{\ac{cp}} \citep{palaciosCompilingUncertaintyAway2009,smithConformantGraphplan1998} addresses planning under partial observability by generating plans that are guaranteed to achieve the goal from any possible initial state.
In \ac{cp}, the agent must find a plan that will achieve the goal from a set of possible states called the \emph{belief set}, denoted \initBeliefSet.
\emph{\Ac{cpp}} \citep{domshlakFastProbabilisticPlanning2006} extends this framework by replacing the belief set with a probability distribution over possible states $\beliefState: 2^\fluentSet \rightarrow [0,1]$, called a \emph{belief state}, or \emph{belief}.
The objective of \ac{cpp} is to find a plan that achieves the goal with some threshold probability \probThresh. 
Our work builds upon \ac{cpp} by utilizing \acp{vlm} to define and update the belief based on visual observations, allowing for more informed planning under uncertainty in visually rich environments. We find robust plans using insights from \citet{taigCompilingConformantProbabilistic2013} and using off-the-shelf conformant planners \citep{maliahComputingContingentPlan2022,shaniReplanningDomainsPartial2011,palaciosCompilingUncertaintyAway2009}.

\vspace{-7pt}

\section{Problem Formulation}

We aim to construct perception–action pipelines that ground symbolic reasoning from raw visual input. These pipelines should produce satisficing task-level plans that remain effective despite sensor noise, occlusion, and partial observability. We therefore formulate \emph{\ac{rvp}} for agents operating in complex, partially observable environments. Hereafter, we focus on an embodied robotic agent, and therefore refer to it as a robot.
In \ac{rvp}, a robot must perceive the world through onboard sensors (e.g., cameras, depth, proprioception), infer symbolic state information from raw visual input, and select high-level actions to execute. These are represented as \emph{skills}, i.e., action implementations for the robot realized by low-level motion controllers.

We formalize the problem as a tuple
$\tuple{\fluentSet, \opSet, \goalState, \workspace, \obsSpace, \sensorModel, \actionMapper}$,
which augments the symbolic task model with a continuous configuration space, an observation model, and stochastic skill execution.
Each component captures a layer in the perception-action hierarchy of a robotic agent:\vspace{-7pt}
\begin{itemize}
\setlength{\itemsep}{-2pt}
    \item $\fluentSet$, $\opSet$, and $\goalState$ define the \textbf{symbolic task space}, following \acs{strips} \citep{fikesStripsNewApproach1971}.
    \item $\workspace$ represents the \textbf{robotic workspace}, comprising the set of all feasible continuous configurations of both the robot and the manipulable objects in the scene.
    A workspace configuration $\wsConfig \in \workspace$ encodes the robot's joint positions, object poses, and environmental states.
    \item $\obsSpace$ denotes the \textbf{observation space} available to the robot, such as RGB-D or multi-view camera images.
    \item $\sensorModel: \workspace \to \Delta\obsSpace$ is the \textbf{observation model}. Given a workspace configuration $\wsConfig$, the robot receives an observation $\obs \sim \sensorModel(\wsConfig)$ that reflects sensor noise, occlusion, and limited field of view.
    \item $\actionMapper: \opSet \times \workspace \to \Delta\workspace$ is the skill executor, which translates symbolic actions to low-level control policies that are executable in the workspace.
    Invoking $\actionMapper(\op, \wsConfig)$ returns the stochastic result of applying a motion primitive or controller (e.g., for grasping or navigation) that attempts to realize the symbolic effects of $\op \in \opSet$ on current workspace configuration $\wsConfig \in \workspace$.
\end{itemize}\vspace{-7pt}
Note that the high-level STRIPS model is a deterministic view of the world, which defines how we expect actions to affect the workspace. However, the actual transition is handled by the non-deterministic skill executor \actionMapper.

Our objective is to harness the capabilities of \acp{vlm} and equip robotic agents with robust planning capabilities in visually complex and uncertain environments.
Assuming that for evaluation, we have access to ground-truth mapping $\seFn: \workspace \to 2^\fluentSet$ from workspace configurations to high-level \acs{strips} states, and to the initial workspace configuration $\wsConfig_0$, we aim to generate policies that maximize the probability of reaching a goal-satisfying workspace configuration under partial observability.

\section{\acf{sc}}

Our approach, \emph{\acf{sc}}, integrates \ac{vlm}-based symbolic state estimation, probabilistic planning, and execution strategies for robust decision-making under perceptual uncertainty in \ac{rvp}.

The \ac{sc} pipeline is depicted in \cref{fig:methods:ours}. Component (A) translates an image observation into a symbolic belief by using a \ac{vlm} to obtain a probability distribution over task-relevant facts, maintained via logarithmic opinion-pooling updates. Component (B) takes the belief and produces a robust plan, selecting a subset of possible states whose cumulative probability surpasses a user-specified threshold and invoking a conformant planner that satisfies all states in the subset. Component (C) handles execution, selecting actions and triggering replanning.

\subsection{\acs*{vlm}-Based Symbolic State Estimation}
In the first stage of the pipeline, we infer a high-level representation of the state from an input image observation, as depicted in component (A) of \cref{fig:methods:ours}.

The common approach to using \acp{vlm} for symbolic state estimation involves prompting the model with visual observations (e.g., images from the robot's cameras) and querying it for the truth values of task-relevant predicates in a deterministic fashion \citep{azranS3ESemanticSymbolic2025,merlerViPlanBenchmarkVisual2025,zhangDKPROMPTDomainKnowledge2024}. We also prompt the model for true-false labels of specific fluents \fluent, but extract the internal probability that the \ac{vlm} assigns to the next-token prediction $\vlmFn(\obs, \prompt)$ over the vocabulary, and normalize the true-false distribution to define a belief over the truth value of $\fluent$.

Intuitively, we would expect the \ac{vlm} to assign ``uncertain'' probabilities to true and false (e.g., both around 0.5) in cases where the fluent is unobservable. However, \acp{vlm} are typically trained on data where the relevant information is already in the image. Therefore, questions about objects not present in the image are considered out-of-distribution, and thus, we cannot expect them to be calibrated for this.
To account for unobservable or uncertain fluents, we allow a third response label \texttt{null} for cases where there is not enough evidence to confidently assign true or false.

We define \emph{probabilistic symbolic grounding} as the extraction of a probability for each $\fluent \in \fluentSet$.
Denote by $\prompt_\fluent$ the prompt used to query fluent \fluent, e.g., ``Is the cabinet currently open?'' for fluent \verb|open(cabinet)|.
\citet{azranS3ESemanticSymbolic2025} showed how to generate these prompts autonomously given a symbolic representation of the domain, e.g., using \ac{pddl}.

To obtain a probability $\prob_{\obs,\fluent}$ for each fluent $\fluent \in \fluentSet$ for that specific observation \obs and \ac{vlm} \vlmFn, we calculate the probability of fluent \fluent. First, we extract the \ac{vlm} output probabilities for the tokens ``true'', ``false'', and ``null''. If the ``null'' token is most likely, we set the probability for that fluent to $\prob_{\obs,\fluent} = 0.5$. Otherwise, we normalize the true-false probabilities $\prob_{\obs,\fluent} = \frac{\prob^{true}_{\obs,\fluent}}{\prob^{true}_{\obs,\fluent} + \prob^{false}_{\obs,\fluent}}$. See \cref{ap:probground} for full details on the probabilistic grounding computation.

We define a \emph{factored belief} $\factoredBelief = (\beliefState_\fluent)_{\fluent\in\fluentSet}\in [0,1]^\fluentSet$ as a belief maintained over individual fluents. Using this representation, we perform per-fluent logarithmic opinion-pooling updates \citep{neymanNoRegretLearningUnbounded2023,genestCombiningProbabilityDistributions1986}. We calculate $\beliefState_\fluent \gets \mathrm{expit}(\mathrm{logit}(\beliefState_\fluent) + \mathrm{logit}(\prob_{\obs,\fluent}))$.
Note that if $\prob_{\obs,\fluent} = 0.5$, then $\mathrm{logit}(\prob_{\obs,\fluent}) = 0$, and thus the belief about fluent \fluent remains unchanged, as expected for an uninformative observation.

To obtain a belief over the full symbolic state, we assume conditional independence between fluents, subject to constraints \constraintSet, e.g., mutual exclusion and co-dependence defined by the task domain.
This factorization is an approximation. The constraints encode known dependencies, while unmodeled correlations may enlarge the \ac{mlss} and make planning conservative.
States that violate \constraintSet are assigned zero probability, and the rest are normalized accordingly.

Denote $\state(\fluent) = \indicator_{\fluent \in \state}$. The belief is calculated as follows:

\begin{minipage}{0.48\textwidth}
\begin{equation}
    \unconstBelief(\state) = \prod_{\fluent \in \fluentSet} \beliefState_\fluent^{\state(\fluent)} (1 - \beliefState_\fluent)^{1 - \state(\fluent)} \label{eq:calc-unconst-belief}
\end{equation}
\end{minipage}
\hfill
\begin{minipage}{0.48\textwidth}
\begin{equation}
    \beliefState(\state) = \begin{cases}
    \frac{\unconstBelief(\state)}{\normalizer} & \text{if } \state \text{ satisfies } \constraintSet \\
    0 & \text{otherwise}
    \end{cases} \label{eq:calc-belief}
\end{equation}
\end{minipage}
where $\normalizer \leq 1$ is a normalization constant.
We call $\unconstBelief$ the \emph{unconstrained belief}, as it does not take into account the constraints in \constraintSet.
Note that for \beliefSet to be well-defined, we must assume there exists at least one state \state that satisfies \constraintSet with a non-zero $\unconstBelief(\state)$.

To extract the constraint set \constraintSet, we use the Fast Downward planning system's finite domain representation \citep{helmertFastDownwardPlanning2006}. This reveals several constraints, e.g., mutually exclusive fluent groups, exposed in a dedicated invariant discovery stage. We can use the graphs generated by this invariant finder to calculate \normalizer in polynomial time. However, ignoring this term, we can view our belief as a lower bound on the actual belief.  This will be enough to obtain a state space that guarantees a desired threshold probability of success in the underlying \ac{cpp} problem (see next section).

\vspace{-7pt}
\subsection{Conformant Probabilistic Planning with \acs*{vlm}-based Belief}
Next in the pipeline is component (B) in \cref{fig:methods:ours}, which generates a robust plan that satisfies the goal with high probability according to our maintained belief.

The \ac{cp} paradigm addresses planning under initial state uncertainty. \ac{rvp} raises two sources of uncertainty, namely perceptual uncertainty when translating from observations to symbolic states, and partial observability in a single observation that may not fully reveal the true symbolic state. When the agent receives an observation, we want to define a \ac{cp} problem that reflects the current belief.

Let \tuple{\fluentSet, \beliefState, \opSet, \goalState, \probThresh} be a \ac{cpp} problem.
\citet{taigCompilingConformantProbabilistic2013} prove that solving this problem is equivalent to solving a \ac{cp} problem \tuple{\fluentSet, \initBeliefSet, \opSet, \goalState} such that the probability that the initial state is in \initBeliefSet is greater than \probThresh, i.e., $\sum_{\state \in \initBeliefSet} \beliefState(\state) \geq \probThresh$.
However, their approach used a planner to find a \initBeliefSet with the cheapest solution. 
We want the robot to act on the best interpretation of the visual scene, and so we propose finding the most likely subset of states that satisfy the probability threshold.

\begin{definition}
    Let \factoredBelief be a factored belief, $\probThresh \in [0,1]$ be a probability threshold, and \constraintSet be a set of constraints over \fluentSet. The \acf{mlss}, denoted by $MLSS(\factoredBelief, \probThresh, \constraintSet)$, is a subset of states $\initBeliefSet \subseteq 2^\fluentSet$ of minimal cardinality such that $\sum_{\state \in \initBeliefSet} \beliefState(\state) \geq \probThresh$.
\end{definition}

\cref{alg:mlss} describes our approach to finding an \ac{mlss} by performing a search over boolean fluent value flips.
In lines \ref{alg:mlss:init-s0}-\ref{alg:mlss:init-sets}, we initialize the search with the most likely state $\state_0$ and set up a max-heap containing only this state.
In line \ref{alg:mlss:extract-q}, we extract the most likely state from the heap, based on its unconstrained belief \unconstBelief, i.e., the belief before applying constraints and normalizing (computed via \cref{eq:calc-unconst-belief}).
The extracted state is added to the output belief set if it satisfies the constraints (line \ref{alg:mlss:append-res}).
The loop starting at line \ref{alg:mlss:for} explores neighboring states by flipping each fluent in turn, adding unvisited neighbors to the heap (line \ref{alg:mlss:insert-q}).
The termination condition (line \ref{alg:mlss:while}) checks whether the cumulative probability of the selected states meets or exceeds \probThresh.
The key insight behind this algorithm is that although states are extracted from the heap according to the unconstrained belief \unconstBelief, they are guaranteed to be in non-increasing order of the belief \beliefState.

\noindent
\begin{minipage}[t][][t]{0.46\textwidth}
        \centering
    \captionof{algorithm}{\acl{mlss}}
\label{alg:mlss}
    \hrule
\begin{algorithmic}[1]
\REQUIRE $\factoredBelief: \fluentSet \rightarrow [0,1]$, $\probThresh \in [0, 1]$, \constraintSet -- set of constraints.
\ENSURE $MLSS(\factoredBelief, \probThresh, \constraintSet)$
\STATE $\state_0 \leftarrow \indicator[\beliefState_\fluent > 0.5] \quad \forall \fluent \in \fluentSet$  \label{alg:mlss:init-s0}
\STATE $\queue \leftarrow$ Empty max-heap
\STATE $\queue$.Insert($\state_0$, $\unconstBelief(\state_0)$) $\quad$ \COMMENT{\unconstBelief from \cref{eq:calc-unconst-belief}}
\STATE $\text{Visited} \leftarrow \{\state_0\}$, $\beliefSet \leftarrow \emptyset$ \label{alg:mlss:init-sets}
\STATE $\prob \leftarrow 0$ \label{alg:mlss:init-p}
\WHILE{$\prob < \probThresh$ \AND $\queue \neq \emptyset$} \label{alg:mlss:while}
    \STATE $\state \leftarrow$ $\queue$.ExtractMax() \label{alg:mlss:extract-q}
    \IF{$\state \models \constraintSet$}
        \STATE $\beliefSet \leftarrow \beliefSet \cup \{\state\}$ \label{alg:mlss:append-res}
        \STATE $\prob \leftarrow \prob + \beliefState(\state)$ $\quad$ \COMMENT{\beliefState from \cref{eq:calc-belief}}
    \ENDIF
    \FOR{$\fluent \in \fluentSet$} \label{alg:mlss:for}
        \STATE $\state' \leftarrow$ Flip($\state$, $\fluent$)
        \IF{$\state' \notin \text{Visited}$}
            \STATE $\text{Visited} \leftarrow \text{Visited} \cup \{\state'\}$
            \STATE $\queue$.Insert($\state'$, $\unconstBelief(\state')$) \label{alg:mlss:insert-q}
        \ENDIF
    \ENDFOR
\ENDWHILE
\RETURN $\beliefSet$
\end{algorithmic}
    \hrule
\end{minipage}
\hfill %
\begin{minipage}[t][][t]{0.46\textwidth}
    \centering
    \captionof{algorithm}{\Ac{sc} Execution Loop}
\label{alg:replan}
    \hrule
\begin{algorithmic}[1]
\REQUIRE $\tuple{\fluentSet, \opSet, \goalState, \workspace, \obsSpace, \sensorModel, \actionMapper}$ is an \ac{rvp} problem, $\probThresh \in [0,1]$, $\initBeliefState: 2^\fluentSet \to [0,1]$, and $\obs \in \obsSpace$
\STATE $\constraintSet \leftarrow$ ExtractConstraints($\fluentSet, \opSet$)
\STATE $\factoredBelief \leftarrow$ VLM-belief($\obs, \vlmFn, \initBeliefState, \constraintSet$)
\WHILE{$\beliefState(\goalState) < \probThresh$}
    \STATE $\initBeliefSet \leftarrow$ MLSS($\factoredBelief, \probThresh, \constraintSet$)
    \STATE $\plan \leftarrow$ \ac{cp}($\fluentSet, \initBeliefSet, \opSet, \goalState, \probThresh$)
    \FOR{$\op \in \plan$}
        \IF{Unsafe($\op, \initBeliefSet$)}
            \STATE \textbf{break} $\quad$ \COMMENT{Replan}
        \ENDIF
        \STATE Execute($\op$, \actionMapper)
        \STATE $\obs \gets$ SensorReading()
        \STATE $\factoredBelief \leftarrow$ VLM-belief($\obs, \vlmFn, \factoredBelief, \constraintSet$)
        \IF{Improbable($\plan_{\text{remaining}}, \factoredBelief$)}
            \STATE \textbf{break} $\quad$ \COMMENT{Replan}
        \ENDIF
    \ENDFOR
\ENDWHILE
\end{algorithmic}
    \hrule
\end{minipage}

\begin{theorem}\label{thm:mlss}
    In \cref{alg:mlss}, when a constraint-satisfying state $\state$ is extracted from the heap $\queue$, then for all $\state'\notin \beliefSet$ it holds that $\beliefState(\state) \geq \beliefState(\state')$, where \beliefState is the belief computed by \cref{eq:calc-belief}.
\end{theorem}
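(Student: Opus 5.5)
The plan is to reduce the claim to a statement about the unconstrained belief $\unconstBelief$, which is what orders the heap. By \cref{eq:calc-belief}, every constraint-satisfying state $\state$ has $\beliefState(\state)=\unconstBelief(\state)/\normalizer$ with the same constant $\normalizer>0$, and every violating state has $\beliefState=0$. So it suffices to show one fact: at the moment $\state$ is extracted, $\unconstBelief(\state)\ge\unconstBelief(\state')$ for every state $\state'$ not extracted before $\state$.

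Given that fact, the theorem follows by cases on $\state'\notin\beliefSet$ (with $\state'\neq\state$, since $\state$ itself is about to be added).
\begin{itemize}
\item If $\state'\not\models\constraintSet$, then $\beliefState(\state')=0\le\beliefState(\state)$.
\item If $\state'\models\constraintSet$, then $\state'$ cannot have been extracted earlier, because every extracted constraint-satisfying state is put into $\beliefSet$. Hence $\unconstBelief(\state)\ge\unconstBelief(\state')$, and dividing by $\normalizer$ gives $\beliefState(\state)\ge\beliefState(\state')$.
\end{itemize}

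\textbf{Key structural lemma (monotone paths).} Take any state $\state'$ and let $D$ be the set of fluents on which $\state'$ differs from $\state_0$. Flip the fluents of $D$ one at a time, in any order, to obtain a path $\state_0=u_0,u_1,\dots,u_m=\state'$ in the flip graph. Each step moves a fluent $\fluent$ away from its value in $\state_0$. By \cref{eq:calc-unconst-belief}, this multiplies $\unconstBelief$ by $(1-\beliefState_\fluent)/\beliefState_\fluent\le 1$ if $\beliefState_\fluent>0.5$, and by $\beliefState_\fluent/(1-\beliefState_\fluent)\le1$ otherwise. So $\unconstBelief(u_0)\ge\unconstBelief(u_1)\ge\dots\ge\unconstBelief(u_m)$. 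To avoid dividing by zero when some $\beliefState_\fluent\in\{0,1\}$, I will state the step comparison directly: the factor for $\fluent$ at $\state_0$'s value is $\max(\beliefState_\fluent,1-\beliefState_\fluent)$, which is at least the factor at the flipped value, while all other factors are unchanged.

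\textbf{Dijkstra-style frontier argument.} Suppose $\state$ is being extracted and $\state'$ has not been extracted yet.
\begin{itemize}
\item If nothing has been extracted before, then $\state=\state_0$, the global maximizer of $\unconstBelief$, and we are done.
\item Otherwise $\state_0$ has been extracted. Walk along the monotone path from $\state_0$ to $\state'$ and let $u_j$ be the first state on it that has not yet been extracted, so $j\ge1$. Its predecessor $u_{j-1}$ was extracted, and the for-loop at line \ref{alg:mlss:for} then marked all its flips visited and inserted them. Since $u_j$ is visited but not extracted, it is currently in $\queue$.
\item Because $\queue$ is a max-heap on $\unconstBelief$, we get $\unconstBelief(\state)\ge\unconstBelief(u_j)\ge\unconstBelief(\state')$, using the monotone path for the second inequality.
\end{itemize}

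\textbf{Main obstacle.} The delicate step is the frontier argument: we must be sure that a state marked visited is genuinely still in the heap rather than silently dropped. This holds because every visited state other than $\state_0$ is inserted exactly when it is marked, and $\state_0$ is inserted at initialization, so any visited, unextracted state remains in $\queue$. Ties, including $\beliefState_\fluent=0.5$ and equal $\unconstBelief$ values, only produce non-strict inequalities, which is all the theorem requires.
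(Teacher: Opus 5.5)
Your proposal is correct and follows essentially the same route as the paper: you reduce to the ordering of $\unconstBelief$ via the common normalizer $\normalizer$, show $\unconstBelief$ is non-increasing along a flip path from $\state_0$, and use the invariant that visited-but-unextracted states remain in the heap. The differences are cosmetic. You argue directly from the first \emph{unextracted} state on the path, where the paper uses induction plus contradiction on the first \emph{unvisited} one, and you compare the single changed factor instead of dividing, which neatly handles zero probabilities.
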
 \vspace{-10pt}
The proof asserts that states are extracted from the heap in non-increasing order of their unconstrained belief \unconstBelief across the entire state space $2^\fluentSet$. Find the full proof in \cref{ap:proof-mlss}.

In the worst case, the number of explored states is exponential in $|\fluentSet|$.
Even if we bound the minimal subset size to \beliefSize, there might still be many constraint-violating states that must be explored before finding \beliefSize valid states.
In the absence of constraints, we can provide a tighter bound on the runtime, hoping that in practice we encounter a manageable number of constraint-violating states.

\vspace{-3pt}
\begin{proposition}\label{thm:efficient}
    If $\constraintSet = \emptyset$, \cref{alg:mlss} runs in $O(|\fluentSet|\beliefSize \log |\fluentSet|\beliefSize )$ time with $O(|\fluentSet|^2\beliefSize)$ bits, where $\beliefSize$ is the number of states in the \ac{mlss}.
\end{proposition}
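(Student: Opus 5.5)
With $\constraintSet = \emptyset$, every extracted state satisfies the constraints and is added to $\beliefSet$. We have $\normalizer = 1$ and $\beliefState = \unconstBelief$. So the while loop performs exactly $\beliefSize$ iterations: $\prob$ increases only through extracted states, and by \cref{thm:mlss} these are extracted in non-increasing order of $\beliefState$. The first $\beliefSize$ extracted states are therefore the top-$\beliefSize$ states, and the loop stops as soon as their mass reaches $\probThresh$, which matches the minimal cardinality $\beliefSize$ of the \ac{mlss}. The plan is to count heap operations and the memory held in the heap and in the Visited set as functions of $|\fluentSet|$ and $\beliefSize$.

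\textbf{Step 1 (heap size).} Each iteration extracts one state and inserts at most $|\fluentSet|$ neighbours. After at most $\beliefSize$ iterations, the number of states ever inserted, and hence $|\text{Visited}|$, is at most $1 + \beliefSize|\fluentSet|$. So the heap never holds more than $O(|\fluentSet|\beliefSize)$ elements.

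\textbf{Step 2 (time).} There are $\beliefSize$ ExtractMax operations and $O(|\fluentSet|\beliefSize)$ Insert operations, each costing $O(\log(|\fluentSet|\beliefSize))$. This gives the claimed $O(|\fluentSet|\beliefSize\log|\fluentSet|\beliefSize)$ bound. The remaining per-neighbour work must then be $O(1)$ amortized, or at most absorbed in the log factor:

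\begin{itemize}
\item \emph{Flip.} Represent a state implicitly by its parent plus one flipped index. Alternatively, note that the insertion cost already dominates.
\item \emph{Recomputing $\unconstBelief(\state')$.} Update in $O(1)$ from $\unconstBelief(\state)$ by multiplying by $\beliefState_\fluent/(1-\beliefState_\fluent)$ or its inverse. Use log-odds to avoid division by zero.
\item \emph{Visited test.} Use hashing with an incrementally maintained (e.g.\ Zobrist-style XOR) hash of the state, so each lookup is $O(1)$ expected.
\end{itemize}

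\textbf{Step 3 (space).} Each stored state is a bit vector of $|\fluentSet|$ bits. At most $O(|\fluentSet|\beliefSize)$ states are stored across Visited, the heap and $\beliefSet$, which gives $O(|\fluentSet|^2\beliefSize)$ bits, treating the stored priority values as constant-size words.

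\textbf{Main obstacle.} The delicate point is Step 2's per-neighbour overhead. Naively copying a $|\fluentSet|$-bit vector, hashing it and recomputing the product of \cref{eq:calc-unconst-belief} costs $\Theta(|\fluentSet|)$ per neighbour. That would give $O(|\fluentSet|^2\beliefSize)$ time, which the stated bound does not allow. I would first try the incremental representation above: constant-time updates of the score and of the hash, with explicit bit vectors materialized only where space is being counted. If fully constant-time handling proves awkward, I would argue in a word-RAM model where a $|\fluentSet|$-bit state fits in $O(1)$ words. This is consistent with the space bound counting bits while the time bound counts word operations.
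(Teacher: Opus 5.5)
Your proposal is correct and follows essentially the same route as the paper's proof. Both bound the heap and the Visited set by $O(|\fluentSet|\beliefSize)$ states, charge $O(\log(|\fluentSet|\beliefSize))$ per heap operation with an $O(1)$ incremental update of $\unconstBelief$, and count $|\fluentSet|$ bits per stored state. The paper simply asserts that flips and membership checks cost $O(1)$, which is the word-level cost model your fallback makes explicit, and it states ``at most $\beliefSize$ iterations'' without the justification via \cref{thm:mlss} that you give. Your aside under \emph{Flip} that ``the insertion cost already dominates'' is false, since an $\Theta(|\fluentSet|)$ copy is not absorbed by an $O(\log(|\fluentSet|\beliefSize))$ insertion. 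Your own main-obstacle paragraph already recognizes and resolves this.
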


\vspace{-5pt}
With this, we can efficiently generate the \ac{cpp} problem that arises from the \ac{vlm}-based belief state estimation. By characterizing the accuracy of a \ac{vlm} by its cumulative error over all fluents, we can guarantee that if the \ac{vlm}'s per-fluent predictions are \emph{sufficiently} accurate, the true state will be included in the \ac{mlss}. This enables the user to define bounds on sufficiency and accuracy. As a first step, we prove a sufficient condition on the factored belief that guarantees that the true state is included in the \ac{mlss}.

\begin{definition}[Cumulative factored belief error]\label{def:calib-err}
    Let $\state \in 2^\fluentSet$ be some high-level state of the world, and let $\factoredBelief = (\beliefState_\fluent)_{\fluent\in\fluentSet}$ be a factored belief. The \emph{cumulative factored belief error} is $
    \calibError(\factoredBelief, \state) =  \sum_{\fluent \in \fluentSet} |\beliefState_\fluent - \state(\fluent)|$.
\end{definition}

\begin{theorem}\label{thm:calib}
    Let $\probThresh \in (0,1]$, let \constraintSet be a set of constraints over \fluentSet, and let $\state \in 2^\fluentSet$ satisfy \constraintSet. For every $\factoredBelief \in [0,1]^\fluentSet$, if $\calibError(\factoredBelief, \state) < \probThresh$, then $\state \in MLSS(\factoredBelief, \probThresh, \constraintSet)$.
\end{theorem}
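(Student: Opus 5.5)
The plan is to show that the hypothesis $\calibError(\factoredBelief, \state) < \probThresh$ forces $\state$ to carry so much probability mass that \emph{every} subset of $2^\fluentSet$ with total belief at least $\probThresh$ must contain $\state$. Then $\state$ lies in any \ac{mlss}, whichever minimal-cardinality set is chosen. The same argument also covers the concrete output of \cref{alg:mlss}.

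\textbf{Step 1: lower bound on the unconstrained belief.} For each fluent set $e_\fluent = |\beliefState_\fluent - \state(\fluent)| \in [0,1]$. The factor of \cref{eq:calc-unconst-belief} belonging to $\fluent$ equals $\beliefState_\fluent$ when $\state(\fluent)=1$ and $1-\beliefState_\fluent$ when $\state(\fluent)=0$; in both cases it is $1-e_\fluent$. Hence
\begin{equation*}
\unconstBelief(\state) = \prod_{\fluent\in\fluentSet}(1-e_\fluent) \geq 1 - \sum_{\fluent\in\fluentSet} e_\fluent = 1 - \calibError(\factoredBelief,\state) > 1-\probThresh .
\end{equation*}
The inequality is the Weierstrass product inequality $\prod_i (1-x_i) \ge 1-\sum_i x_i$ for $x_i\in[0,1]$. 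I would prove it by induction on $|\fluentSet|$, using $(1-x)(1-S) \ge 1 - x - S$ whenever $x,S\ge 0$ and the partial product is nonnegative.

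\textbf{Step 2: pass to the constrained belief.} Since $\state$ satisfies $\constraintSet$, \cref{eq:calc-belief} gives $\beliefState(\state) = \unconstBelief(\state)/\normalizer$. Because $\normalizer \le 1$, this yields $\beliefState(\state) \ge \unconstBelief(\state) > 1-\probThresh$. This also gives a state with positive mass, so the belief is well defined.

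\textbf{Step 3: conclude.} Let $\beliefSet \subseteq 2^\fluentSet$ with $\state\notin\beliefSet$. Since $\beliefState$ is a probability distribution, $\sum_{\state'\in\beliefSet}\beliefState(\state') \le 1-\beliefState(\state) < \probThresh$. So no set excluding $\state$ reaches the threshold, and every admissible set, in particular every minimal-cardinality one, contains $\state$.

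For the algorithmic version, \cref{alg:mlss} keeps looping while its accumulated mass is below $\probThresh$. As long as $\state$ has not been added, that mass is the mass of a set excluding $\state$, which is $<\probThresh$ by the above. The flip-neighbourhood search visits all of $2^\fluentSet$ (the hypercube is connected), so the heap cannot empty before $\state$ is extracted. Hence $\state$ is added to the returned set, and \cref{thm:mlss} confirms this set is a genuine \ac{mlss}.

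The main obstacle is Step 1, namely checking that the product bound holds with the per-fluent errors $e_\fluent$ exactly as in \cref{def:calib-err}. The rest follows from $\normalizer\le 1$ and complementation.
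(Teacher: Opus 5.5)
Your proposal is correct and follows essentially the same route as the paper. The Weierstrass product inequality gives $\unconstBelief(\state) \geq 1-\calibError > 1-\probThresh$, the bound $\normalizer \leq 1$ transfers this to $\beliefState(\state)$, and a complement-mass argument (which the paper phrases as a contradiction) shows that no set omitting $\state$ can reach $\probThresh$. Your remarks on the positivity of $\normalizer$ and on the concrete output of \cref{alg:mlss} are valid additions that the paper does not spell out.
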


The proof uses the Weierstrass product inequality to lower-bound the probability of a constraint-satisfying state. The full proof is provided in \cref{ap:proof:calib}. If the true current state $\state^*$ is in the \ac{mlss}, any conformant plan found for the \ac{mlss} is valid from $\state^*$ under the symbolic model. We next give sufficient conditions under which repeated observations of a fixed state can reduce the cumulative factored belief error in a finite number of steps.

\begin{definition}[Weakly calibrated \ac{vlm}]
    For fluent probability $\prob_\fluent$ and true state fluent assignment $\state^*(\fluent)$, define $\prob_{\text{correct}}$ as the probability of the true assignment, i.e., $\prob_\fluent$ if $\state^*(\fluent) = 1$ and $1 - \prob_\fluent$ otherwise, with complement $\prob_{\text{incorrect}} = 1 - \prob_{\text{correct}}$.
    A \ac{vlm} is \emph{weakly calibrated} if there exists $\minPredImp > 0$ such that for every observation $\obs\in\obsSpace$ and fluent $\fluent \in \fluentSet$, the probability $\prob_\fluent$ derived for fluent \fluent is at least \minPredImp closer to the correct prediction than the incorrect one, i.e., $\prob_{\text{correct}} - \prob_{\text{incorrect}} \geq \minPredImp$.
\end{definition}

\begin{definition}[Minimum visibility rate]
    The \emph{minimum visibility rate} $\minVisRate$ is the frequency with which the rarest fluent is observed, where the rarest fluent is a maximizer in $
    \argmax_{\fluent\in \fluentSet} \Pr(\prob_{\obs,\fluent}^{null} > \max\{\prob_{\obs,\fluent}^{true}, \prob_{\obs,\fluent} ^{false}\})$
\end{definition}

\begin{corollary}\label{thm:fin-step}
    Let $\minVisRate > 0$ be the minimum visibility rate. If the \ac{vlm} is weakly calibrated with error margin $\minPredImp$, then there exists a finite number of steps $\finiteStep$ after which a conformant plan for the \ac{mlss} according to \factoredBelief is satisficing from the true current state.
\end{corollary}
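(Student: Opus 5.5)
The plan is to reduce the corollary to \cref{thm:calib}. Fix the true current state $\state^*$, which satisfies $\constraintSet$. It suffices to show that after finitely many observations of $\state^*$ the cumulative factored belief error satisfies $\calibError(\factoredBelief, \state^*) < \probThresh$. Then \cref{thm:calib} gives $\state^* \in MLSS(\factoredBelief, \probThresh, \constraintSet)$. Any conformant plan for the \ac{mlss} is valid from every state in it, so it is satisficing from $\state^*$ under the symbolic model. Because $\calibError$ is a sum over the finitely many fluents, it is enough to drive each $|\beliefState_\fluent - \state^*(\fluent)|$ below $\probThresh/|\fluentSet|$.

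I will work per fluent in log-odds. Define the signed log-odds $\logOdds_\fluent = \mathrm{logit}(\beliefState_\fluent)$ if $\state^*(\fluent)=1$ and $\logOdds_\fluent = -\mathrm{logit}(\beliefState_\fluent)$ otherwise. The opinion-pooling update becomes additive, $\logOdds_\fluent \gets \logOdds_\fluent + \logObs_{\obs,\fluent}$, where $\logObs_{\obs,\fluent}$ is the correspondingly signed $\mathrm{logit}(\prob_{\obs,\fluent})$.

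There are two cases for the increment. When the \texttt{null} token wins, $\prob_{\obs,\fluent}=0.5$ and $\logObs_{\obs,\fluent}=0$, so the belief is unchanged. Otherwise, weak calibration gives $\prob_{\text{correct}} \geq (1+\minPredImp)/2$. Hence $\logObs_{\obs,\fluent} \geq \mathrm{logit}((1+\minPredImp)/2) = \log\frac{1+\minPredImp}{1-\minPredImp} =: c > 0$. So $\logOdds_\fluent$ never decreases, and it increases by at least $c$ on every informative observation.

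Suppose fluent $\fluent$ has had $n_\fluent$ informative observations by step $t$. Then $\logOdds_\fluent \geq \logOdds_\fluent^{(0)} + c\, n_\fluent$, where $\logOdds_\fluent^{(0)}$ is the initial value and is finite for an interior prior. The error is $|\beliefState_\fluent - \state^*(\fluent)| = 1-\mathrm{expit}(\logOdds_\fluent)$. Choose $N$ with $\mathrm{expit}(\min_\fluent \logOdds_\fluent^{(0)} + cN) > 1 - \probThresh/|\fluentSet|$. Once every fluent has been informatively observed $N$ times, $\calibError < \probThresh$. The strict monotonicity of the log-odds ensures this bound persists at all later steps.

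The main obstacle is turning the minimum visibility rate $\minVisRate$ into a finite step count $\finiteStep$. The definition is stated as a frequency, so I would read it as follows: over any window of steps, each fluent receives a non-\texttt{null} response at least a $\minVisRate$ fraction of the time. Under this reading, after $\finiteStep = \lceil N/\minVisRate \rceil$ steps every fluent has at least $N$ informative updates, and the argument closes.

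If $\minVisRate$ is instead only a per-step probability, I would argue almost-sure finiteness. Each fluent's count of informative observations grows like a sum of independent trials with success probability at least $\minVisRate$. A union bound over the finitely many fluents then shows that all counts reach $N$ in finite time with probability one, and a high-probability bound on $\finiteStep$ follows from a Chernoff bound.

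I would also state two implicit assumptions explicitly. First, the world state stays fixed at $\state^*$ during observation, which matches the corollary's focus on repeated observations of a fixed state. Second, the prior lies strictly inside $(0,1)$, so that the log-odds are finite.
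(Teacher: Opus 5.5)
Your proposal is correct and follows essentially the same route as the paper. Both arguments lower-bound the log-odds of the correct assignment of each fluent, which grows by at least $\log\frac{1+\gamma}{1-\gamma}$ on every visible (non-\texttt{null}) step; both use $\rho$ to turn this into a finite $T$ at which the cumulative error falls below $\theta$, and then apply \cref{thm:calib} to put the true state in the MLSS. Your deviations are minor and slightly cleaner. You carry a general interior prior instead of the paper's ``without loss of generality uniform'' step, and you bound each fluent's error by $\theta/|F|$ directly via $\mathrm{expit}$ rather than through $1/(1+e^{L}) \leq e^{-L}$.
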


In the proof, we find a lower bound on the current log odds in the belief that grows over time, and push it above $\probThresh$. Thus, the \ac{vlm} need not be perfect. It only needs to be persistently better than a random guess on visible fluents to eventually generate a satisficing plan for the real-world state. The full proof is in \cref{ap:proof:fin-step}.

These assumptions describe when repeated visible predictions provide consistent evidence for each fluent. They are weaker than assuming error-free deterministic grounding.
Both assumptions can be empirically evaluated for a given \ac{vlm} and task domain. The parameters \minPredImp and \minVisRate can then be used to calculate the number of steps required by \cref{thm:fin-step}.

Of course, when using a conformant planner, there is an inherent tradeoff between robustness and completeness. The \ac{cpp} component of \ac{sc} allows the user to iteratively adjust this by calibrating the success probability threshold \probThresh if a plan is not found.

\vspace{-7pt}
\subsection{Executing Conformant Probabilistic Plans}
We aim to support robotic settings, wherein executing a plan may lead to unexpected outcomes due to perceptual uncertainty, partial observability, and action failure.
As such, part of the robustness of our policy relies on the ability of the executor to detect failure and inconsistency. A replan may be triggered when new observations indicate that the current belief is inconsistent with the belief set used for planning, or the action execution failed. In the classical case, initiating replanning is straightforward: when an action cannot be executed, or its observed outcome does not match the expected state, the agent replans from the new observed state \citep{yoonFFReplanBaselineProbabilistic2007}. In the conformant case, there are a few more considerations.

To handle replanning, we introduce a novel conformant plan executor, handled in component (C) of \cref{fig:methods:ours}.
We need to determine when the current plan no longer aligns with the belief derived from new observations.
Our execution component introduces a belief-consistency-based monitoring criterion, a probabilistic monitoring scheme tailored for \ac{vlm} uncertainty.
\acp{vlm} can drastically change their predictions from one observation to another, so the standard ``expected state vs observed state'' criterion can lead to constant replanning.
As such, we propose three replanning triggers based on this misalignment, namely improbable plan, unsafe action, and plan exhaustion.
We focus on belief drift via the improbable plan trigger, which requires the \ac{vlm} to output predictions that are persistently inconsistent with the belief and with great confidence.
The unsafe action trigger protects against ``hallucinated certainty'', where a \ac{vlm} might initially be confident but wavers as the robot approaches.
Finally, the plan exhaustion trigger ensures that if the plan is not successful after execution, we replan to correct for any misinterpretations of the environment.
To our knowledge, this is the first execution framework that explicitly reasons about \ac{vlm}-induced belief drift rather than single-step perceptual inconsistency.
The full execution loop is outlined in \cref{alg:replan}.

An improbable plan is one whose probability of success is below a threshold \probThresh given the current belief.
As a proxy for this value, we calculate the probability of the current belief set \initBeliefSet, after executing all actions taken thus far.
If this value drops below \probThresh, it is possible that the plan's success probability is also below \probThresh, and so we replan.

The resulting policy from \cref{alg:replan}, which we denote as $\policy_{cpp}$, provides a conditional robustness guarantee based on the accuracy of our perceptual belief, according to \cref{thm:calib}. This execution strategy also provides a safety guarantee compared to \ac{vlm}-as-grounder approaches, as shown in \cref{ap:safety}.

The computational requirements of \cref{alg:replan} are divided into three components.
The \ac{vlm} is queried once per fluent, so this stage is linear in $|\fluentSet|$ (number of \ac{vlm} calls), and in practice dominates wall-clock time because \ac{vlm} calls are expensive. These independent queries can be batched or restricted to task-relevant fluents.
The \ac{mlss} algorithm has a worst-case exponential time complexity in the number of fluents, but as shown in \cref{thm:efficient}, this is manageable when the number of constraint-violating states is limited.
Finally, \ac{cp} itself is worst-case exponentially hard, meaning \ac{sc} inherits the conformant planners' theoretical complexity.

\vspace{-18pt}

\section{Use Case for Robust Visual Planning with a Simulated Household Robot}

\noindent\textbf{Domain.}
We evaluate \ac{sc} on ViPlan-HH \citep{merlerViPlanBenchmarkVisual2025}, a household robotics benchmark built on iGibson \citep{li2022igibson}.
We enhance the benchmark by removing any privileged information about hidden objects, making this a true partially observable domain\footnote{All code is available at \url{https://github.com/CLAIR-LAB-TECHNION/ViPlanPO/}}.
The robot is a mobile manipulator with an RGB camera and one arm. Tasks include sorting books on shelves, cleaning out drawers, locking doors, packing and unpacking groceries, and other household rearrangement tasks.
Tasks are categorized into three difficulty levels (simple, medium, hard) based on the task horizon, meaning that harder tasks require more actions to complete, but are not necessarily more complex.
See \cref{ap:domain} for details on the domain and tasks.
The domain is specified in \ac{pddl} with movable and fixed objects, relations such as \texttt{ontop} and \texttt{inside}, and high-level actions such as \texttt{navigate-to}, \texttt{grasp}, and \texttt{place-on}.
Motion actions use probabilistic executors and may fail because of kinematics, collisions, or similar constraints.
Observations are egocentric RGB images, so the robot sees only part of the home and must reason about hidden objects while replanning from new views.
Sample \ac{pddl} files appear in \cref{ap:pddl}.

\noindent\textbf{Setup.}
We compare three planning-loop paradigms from \cref{fig:methods:theirs,fig:methods:ours}, namely \ac{vlm}-as-planner (VLM-P), \ac{vlm}-as-grounder (VLM-G), and \ac{sc}.
All methods receive the current image and \acs{pddl}-derived context.
We use GPT-4.1 \citep{openaiGPT4VisionSystemCard2023} through the OpenAI API for all methods.
The planner baseline returns text actions, while \ac{vlm}-as-grounder and \ac{sc} use next-token probabilities from fluent queries.
Implementation details, baselines, and prompts are in \cref{ap:use-case,ap:baselines,ap:prompts}.

\begin{table}[h]
  \centering
  \caption{ViPlan-HH results. \ac{vlm}-P is \ac{vlm}-as-planner, \ac{vlm}-G is \ac{vlm}-as-grounder, and \ac{sc} is our method. Success and valid-first-plan rates are percentages. Lower action and planner-call counts are better.}
  \label{tab:results}
  \resizebox{\columnwidth}{!}{
\begin{tabular}{lccccccccc}
\toprule
 & \multicolumn{3}{c}{Simple} & \multicolumn{3}{c}{Medium} & \multicolumn{3}{c}{Hard} \\
 & VLM-P & VLM-G & \shortstack{RoVLaP\\(ours)} & VLM-P & VLM-G & \shortstack{RoVLaP\\(ours)} & VLM-P & VLM-G & \shortstack{RoVLaP\\(ours)} \\
\midrule
Success (\%) & 20.0 & 40.0 & \textbf{86.7} & 7.1 & 21.4 & \textbf{57.1} & 33.3 & 0.0 & \textbf{66.7} \\
Valid first plan (\%) & 46.7 & 40.0 & \textbf{66.7} & 0.0 & \textbf{50.0} & \textbf{50.0} & 11.1 & \textbf{22.2} & \textbf{22.2} \\
\# Actions & 7.3 & 7.1 & \textbf{5.9} & 15.0 & 13.8 & \textbf{11.2} & 17.4 & 20.0 & \textbf{13.2} \\
\# Planner calls & 7.3 & 4.4 & \textbf{2.3} & 15.0 & 6.4 & \textbf{1.5} & 17.4 & 10.3 & \textbf{1.9} \\
\bottomrule
\end{tabular}
}
\vspace{4pt}

  \vspace{-8pt}
\end{table}

\noindent\textbf{Quantitative Results.}
\Cref{tab:results} reports task execution success, valid-first-plan rates, and mean action counts and planner calls per trial.
\Ac{sc} achieves the highest success rate in every split.
Relative to VLM-G, success improves by 116.8\% on simple tasks and 166.8\% on medium tasks. On hard tasks, VLM-G solves no instances while \ac{sc} solves 66.7\%.
Relative to VLM-P, success improves by 333.5\%, 704.2\%, and 100.3\% on simple, medium, and hard tasks, respectively.
Compared to both baselines, \ac{sc} fully solves some task families for which neither baseline solves any instance.
These performance gains also come with reduced planning overhead.
\Ac{sc} uses between 1.9 and 10.0 times fewer planner calls than the baselines, and executes between 1.2 and 6.8 fewer actions on average.
The 66.7\% valid-first-plan rate on simple tasks indicates that considering multiple initial-state hypotheses is often sufficient to generate a valid plan without replanning.
This advantage for \ac{sc}, however, is only visible in the simplest tasks.

\noindent\textbf{Qualitative Behavior.}
The same mechanism explains the observed failures.
In \emph{cleaning out drawers}, the VLM-P baseline assumes the hidden bowl is reachable and plans a direct grasp, which fails when the bowl is inside a closed cabinet.
\Ac{sc} assigns probability to the hidden-object state and opens the cabinet first, so the plan succeeds whether or not the bowl is hidden.
In \emph{sorting books}, the image can make objects near the gripper appear held, and a deterministic grounder may conclude that the robot already holds the hardback.
\Ac{sc} keeps both holding hypotheses and selects a plan that is feasible under either one, avoiding the brittle dependence on a single perceptual judgment.

\vspace{-10pt}

\section{Conclusion}
To enable robotic agents to operate effectively in real-world, partially observable environments characterized by limited and uncertain perceptual information, we propose a framework for robust planning under perceptual uncertainty. The approach employs \acfp{vlm} to derive a probabilistic, factored representation of the current state, which is used to update the agent's belief. This belief representation supports the synthesis of robust task plans using off-the-shelf planners. As a natural extension of this work, we will incorporate active sensing into the planning process, enabling agents to reason explicitly about the informational value of sensing actions. In addition, we plan to embed the proposed methodology within embodied robotic systems and evaluate its effectiveness across a suite of complex task-and-motion planning (TAMP) domains.

\acks{
Beyond the experiments, we used generative AI tools in the following ways:
\begin{itemize}
    \item Gemini's \citep{GoogleGemini} deep research feature was used to verify novelty after the literature review.
    \item ChatGPT \citep{openaiChatGPT} was used to rephrase text and detect basic syntax and grammar errors.
    \item Codex \citep{chenEvaluatingLargeLanguage2021} was used for initial implementations of some components in the experimental code.
\end{itemize}
}

\bibliography{references}

\appendix

\crefalias{section}{appendix}
\crefalias{subsection}{appendix}

\section{Probabilistic Grounding Calculation}\label{ap:probground}

The exact computation of the probabilistic grounding of fluent \fluent given observation \obs is:
\begin{enumerate}
    \item Extract the \ac{vlm} output probabilities for the tokens ``true'', ``false'', and ``null''. Denote $\prob^{true}_{\obs,\fluent} = \vlmFn(\obs, \prompt_{\fluent})[true]$, $\prob^{false}_{\obs,\fluent} = \vlmFn(\obs, \prompt_{\fluent})[false]$, $\prob^{null}_{\obs,\fluent} = \vlmFn(\obs, \prompt_{\fluent})[null]$.
    \item If $\prob^{null}_{\obs,\fluent} > \prob^{true}_{\obs,\fluent}$ and $\prob^{null}_{\obs,\fluent} > \prob^{false}_{\obs,\fluent}$, set $\prob_{\obs,\fluent} = 0.5$.
    \item Otherwise, normalize the true-false probabilities: $\prob_{\obs,\fluent} = \frac{\prob^{true}_{\obs,\fluent}}{\prob^{true}_{\obs,\fluent} + \prob^{false}_{\obs,\fluent}}$.
\end{enumerate}

\section{Proof of \texorpdfstring{\cref{thm:mlss}}{Theorem~\ref{thm:mlss}}}\label{ap:proof-mlss}

\begin{definition}[Bit diffset]
    Let $\state,\state' \in 2^\fluentSet$. The \emph{bit diffset} $D(\state, \state')$ is defined as the set of fluents that differ between \state and $\state'$, i.e., 
    $$
    D(\state, \state') = \{\fluent \in \fluentSet | \state(\fluent) \neq \state'( \fluent)\}
    $$
\end{definition}

\begin{definition}[Bit-flip operator]\label{def:bit-flip}
    Let $\state \in 2^\fluentSet$ and $\fluent \in \fluentSet$. The \emph{bit-flip operator} $\text{Flip}(\state, \fluent)$ returns a new state $\state' \in 2^\fluentSet$ such that:
    $$
    \forall \fluent' \in \fluentSet \quad \state'(\fluent') = 
    \begin{cases}
        1 - \state(\fluent) & \text{if } \fluent' = \fluent \\
        \state(\fluent') & \text{otherwise}
    \end{cases}
    $$
\end{definition}

\begin{definition}[Canonical path]\label{def:canon-path}
    Let $\state,\state' \in 2^\fluentSet$. Let $D(\state, \state') = \{\fluent_1, ..., \fluent_n\}$ where the fluents are numbered according to a fixed total ordering over \fluentSet.
    The \emph{canonical path} from \state to $\state'$ is the path of states $\state = \state_0 \rightarrow \state_1 \rightarrow ... \rightarrow \state_n = \state'$ achieved by flipping the bits of state $\state_0$ one by one in the order of the fluents.
    That is, for $i \in \{1, ..., n\}$, define $\state_i = \text{Flip}(\state_{i - 1}, f_i)$, from $\state_0 = \state$ leading to $\state_n = \state'$.
\end{definition}

\begin{lemma}\label{le:res-vis}
    In \cref{alg:mlss}, let \queueExtracted denote the set of states that have been extracted from the heap \queue thus far. Then at the moment of max-extraction from \queue, it holds that $\queue \cup \queueExtracted = \text{Visited}$.
\end{lemma}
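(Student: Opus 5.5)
We prove \cref{le:res-vis} by induction on the number of loop iterations of \cref{alg:mlss}. Throughout, every state ever inserted into \queue is added to Visited at the same moment, and a state is inserted at most once: the initial state $\state_0$ is inserted together with $\text{Visited}=\{\state_0\}$, and in the \textbf{for} loop a neighbor $\state'$ is inserted only if $\state'\notin\text{Visited}$, immediately after being added to Visited. So the plan is to maintain the stronger invariant that, at the top of each \textbf{while} iteration (just before ExtractMax), $\queue \cup \queueExtracted = \text{Visited}$, where we read $\queue$ as the set of states currently in the heap. We also note that $\queue$ and $\queueExtracted$ are disjoint.

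\textbf{Base case.} Before the first iteration, after lines \ref{alg:mlss:init-s0}--\ref{alg:mlss:init-sets}, we have $\queue=\{\state_0\}$, $\queueExtracted=\emptyset$ and $\text{Visited}=\{\state_0\}$, so the invariant holds.

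\textbf{Inductive step.} Assume the invariant holds at the start of an iteration. ExtractMax moves one state $\state$ from \queue to \queueExtracted, so the union is unchanged while Visited is untouched. The constraint check and the update of $\beliefSet$ and $\prob$ modify none of the three sets. In the \textbf{for} loop, each iteration either does nothing (when $\state'\in\text{Visited}$) or adds the same $\state'$ both to Visited and to \queue. Each such step preserves equality of $\queue\cup\queueExtracted$ and Visited. Hence the invariant holds at the next check of line \ref{alg:mlss:while}, and in particular at the next max-extraction. If the loop terminates, no extraction occurs and there is nothing to prove.

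\textbf{Main obstacle.} The only delicate point is the precise reading of ``at the moment of max-extraction.'' Immediately after extraction, the extracted state belongs to \queueExtracted rather than \queue. We therefore state the invariant at the instant just before (equivalently, just after) the extraction, and observe that the extraction only transfers an element between the two disjoint parts of the union. This makes the claim hold at either instant. No property of \unconstBelief or of the constraints \constraintSet is needed, since the lemma is purely about bookkeeping.
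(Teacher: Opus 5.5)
Your proof is correct and uses essentially the same bookkeeping argument as the paper: states enter \queue and Visited together, extraction only moves a state from \queue to \queueExtracted, and nothing is ever removed from Visited. You package this as an explicit loop invariant, and you also state the reverse inclusion (every addition to Visited is paired with a heap insertion), which the paper's one-directional wording leaves implicit.
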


\begin{proof}
Note that every insertion to the heap is accompanied by an insertion of the same state to the visited set.
All states that have ever been added to the heap are either still in the heap or have been extracted from it.
Since no state is ever removed from the visited set, the lemma holds.
\end{proof}

\begin{lemma}\label{le:path-mono}
    In \cref{alg:mlss}, the probabilities of the states on the canonical path from $\state_0$ to any state \state are monotonically non-increasing in the states' unconstrained belief values, i.e.:
    $$
    \unconstBelief(\state_0) \geq ... \geq \unconstBelief(\state_n) = \unconstBelief(\state)
    $$
\end{lemma}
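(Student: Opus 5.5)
We prove \cref{le:path-mono} by a direct per-step comparison along the canonical path, using the product form of the unconstrained belief in \cref{eq:calc-unconst-belief}. Fix $\state$ and let $\state_0 \rightarrow \state_1 \rightarrow \cdots \rightarrow \state_n = \state$ be the canonical path of \cref{def:canon-path}, with $D(\state_0,\state)=\{\fluent_1,\ldots,\fluent_n\}$. It suffices to show $\unconstBelief(\state_{i-1}) \geq \unconstBelief(\state_i)$ for each $i \in \{1,\ldots,n\}$ and then chain the inequalities.

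For a single step, $\state_i = \text{Flip}(\state_{i-1},\fluent_i)$ by \cref{def:bit-flip}. The two states agree on every fluent except $\fluent_i$, so all factors of \cref{eq:calc-unconst-belief} coincide except the $\fluent_i$-factor. The key observation is that $\fluent_i \in D(\state_0,\state)$ and the canonical path flips each differing fluent exactly once. Hence $\state_{i-1}(\fluent_i) = \state_0(\fluent_i)$ (not yet flipped) and $\state_i(\fluent_i) = 1-\state_0(\fluent_i)$. By line \ref{alg:mlss:init-s0}, $\state_0(\fluent_i)=1$ iff $\beliefState_{\fluent_i} > 0.5$. So the $\fluent_i$-factor of $\state_{i-1}$ is $\max(\beliefState_{\fluent_i}, 1-\beliefState_{\fluent_i})$ and that of $\state_i$ is $\min(\beliefState_{\fluent_i}, 1-\beliefState_{\fluent_i})$. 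Ties at exactly $0.5$ give equality, which is harmless.

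Multiplying the common nonnegative product of the remaining factors by $\max(\cdot) \geq \min(\cdot)$ yields $\unconstBelief(\state_{i-1}) \geq \unconstBelief(\state_i)$. Notably this holds even when some $\beliefState_\fluent \in \{0,1\}$ makes a factor zero, since we compare products with a shared nonnegative factor rather than taking ratios.

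The only real subtlety is the claim that $\state_{i-1}$ still agrees with $\state_0$ on $\fluent_i$. This follows because the indices $\fluent_1,\ldots,\fluent_n$ are distinct, so earlier flips touch other fluents. I would state this as a one-line induction: $\state_j(\fluent) = \state_0(\fluent)$ for $\fluent \notin \{\fluent_1,\ldots,\fluent_j\}$. Everything else is routine.
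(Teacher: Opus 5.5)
Your proof is correct and takes essentially the same route as the paper's. Both isolate the single factor that changes at step $i$ and use that $s_{i-1}(f_i)=s_0(f_i)$, which by the definition of $s_0$ selects the larger of $\beta_{f_i}$ and $1-\beta_{f_i}$. The paper phrases this as a contradiction and divides out the shared factors, which the assumed strict inequality forces to be nonzero; your direct comparison via a common nonnegative multiplier avoids that division and also spells out why $s_{i-1}(f_i)=s_0(f_i)$, which the paper only asserts.
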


\begin{proof}
    Assume by contradiction that there exists $0 < j \leq n$ such that $0 \leq \unconstBelief(\state_{j - 1}) < \unconstBelief(\state_j)$.
    Then all of the components that comprise the product in $\unconstBelief(\state_j)$ are non-zero.
    Thus, we can safely divide by
    $$
    \prod_{\fluent \in \fluentSet \setminus \{\fluent_j\}} \factoredBelief(\fluent)^{\state_{j - 1}(\fluent)} (1 - \factoredBelief(\fluent))^{1 - \state_{j - 1}(\fluent)}
    $$
    From here, we have:
    \begin{align*}
        & \unconstBelief(\state_{j - 1}) < \unconstBelief(\state_j) \\
        \overset{1}{\iff} & \prod_{\fluent \in \fluentSet} \factoredBelief(\fluent)^{\state_{j - 1}(\fluent)} (1 - \factoredBelief(\fluent))^{1 - \state_{j - 1}(\fluent)} < \prod_{\fluent \in \fluentSet} \factoredBelief(\fluent)^{\state_{j}(\fluent)} (1 - \factoredBelief(\fluent))^{1 - \state_{j}(\fluent)} \\
        \overset{2}{\iff} & \factoredBelief(\fluent_j)^{\state_{j - 1}(\fluent_j)} (1 - \factoredBelief(\fluent_j))^{1 - \state_{j - 1}(\fluent_j)} < \factoredBelief(\fluent_j)^{\state_{j}(\fluent_j)} (1 - \factoredBelief(\fluent_j))^{1 - \state_{j}(\fluent_j)} \\
        \overset{3}{\iff} & \factoredBelief(\fluent_j)^{\state_{j-1}(\fluent_j)} (1 - \factoredBelief(\fluent_j))^{1 - \state_{j-1}(\fluent_j)} < \factoredBelief(\fluent_j)^{1 - \state_{j-1}(\fluent_j)} (1 - \factoredBelief(\fluent_j))^{\state_{j-1}(\fluent_j)}
    \end{align*}
    Transition 1 follows the definition of \unconstBelief in \cref{eq:calc-unconst-belief}. 
    Transition 2 is a division by all components in the product except for the one corresponding to $\fluent_j$, which are non-zero, which are equal since $\state_{j-1}(\fluent) = \state_j(\fluent)$ for all $\fluent \neq \fluent_j$ by the definition of the canonical path in \cref{def:canon-path}.
    Transition 3 follows from the definition of the bit-flip operation in \cref{def:bit-flip}, which states that $\state_j(\fluent_j) = 1 - \state_{j - 1}(\fluent_j)$.

    If $\state_{j-1}(\fluent_j) = 1$, then the above simplifies to
    \begin{align*}
    & \factoredBelief(\fluent_j)^1 (1 - \factoredBelief(\fluent_j))^0 < \factoredBelief(\fluent_j)^0 (1 - \factoredBelief(\fluent_j))^1 \\
    & \Rightarrow \factoredBelief(\fluent_j) < 0.5
    \end{align*}
    If $\state_{j-1}(\fluent_j) = 0$, then the above simplifies to
    \begin{align*}
    & \factoredBelief(\fluent_j)^0 (1 - \factoredBelief(\fluent_j))^1 < \factoredBelief(\fluent_j)^1 (1 - \factoredBelief(\fluent_j))^0 \\
    & \Rightarrow \factoredBelief(\fluent_j) > 0.5
    \end{align*}
    Since $\state_{j-1}(\fluent_j) = \state_0(\fluent_j)$, then both cases contradict the definition of $\state_0$, which for all fluents is defined as $\state_0(\fluent) = \indicator[\factoredBelief(\fluent) > 0.5]$.
\end{proof}

\begin{lemma}\label{le:order}
    In \cref{alg:mlss}, when a state $\state \in 2^\fluentSet$ is extracted from the max-heap \queue, then for all $\state' \in 2^\fluentSet$ not yet extracted from \queue, it holds that $\unconstBelief(\state) \geq \unconstBelief(\state')$.
\end{lemma}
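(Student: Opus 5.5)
The plan is a Dijkstra-style argument. Fix the moment at which $\state$ is extracted. Take any $\state'$ that has not yet been extracted, and split into two cases according to whether $\state'$ is currently in the heap $\queue$.

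\emph{Case 1: $\state' \in \queue$.} Here $\unconstBelief(\state) \geq \unconstBelief(\state')$ follows directly from the max-heap property, since ExtractMax returns an element of maximal key and the keys are exactly the values $\unconstBelief$.

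\emph{Case 2: $\state' \notin \queue$.} By \cref{le:res-vis}, $\queue \cup \queueExtracted = \text{Visited}$. Since $\state'$ is neither extracted nor in the heap, we get $\state' \notin \text{Visited}$. In particular $\state' \neq \state_0$, because $\state_0$ is visited at initialization. Now consider the canonical path $\state_0 \to \state_1 \to \dots \to \state_n = \state'$ from \cref{def:canon-path}.
\begin{itemize}
\item $\state_0$ is visited and $\state_n$ is not, so there is a first index $j \geq 1$ with $\state_j \notin \text{Visited}$.
\item Then $\state_{j-1} \in \text{Visited}$, so by \cref{le:res-vis} it is either in $\queue$ or in $\queueExtracted$.
\item $\state_{j-1}$ cannot be in $\queueExtracted$. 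The for-loop at line \ref{alg:mlss:for} runs in full after every extraction, so an extracted $\state_{j-1}$ would have had all its bit-flip neighbours marked visited. This includes $\state_j = \text{Flip}(\state_{j-1}, \fluent_j)$, contradicting the choice of $j$.
\item Hence $\state_{j-1} \in \queue$, and the heap property gives $\unconstBelief(\state) \geq \unconstBelief(\state_{j-1})$.
\item By \cref{le:path-mono}, $\unconstBelief(\state_{j-1}) \geq \unconstBelief(\state_n) = \unconstBelief(\state')$.
\end{itemize}
Chaining the two inequalities gives $\unconstBelief(\state) \geq \unconstBelief(\state')$.

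The step I expect to need the most care is the claim that an extracted state has had all its neighbours visited at the moment of a \emph{later} extraction. This needs the invariant that the for-loop always completes between extractions, i.e. there is no early exit inside the loop body; the algorithm as written satisfies this. I will also handle the edge cases explicitly:
\begin{itemize}
\item If $\state$ is the very first extraction, then $\state = \state_0$ and the heap contains only $\state_0$, so Case 1 never arises.
\item Every state on the canonical path other than $\state_0$ is a one-bit flip of its predecessor, so reachability by the flips in line \ref{alg:mlss:for} is exactly what the argument uses.
\end{itemize}
No property of the constraints $\constraintSet$ is used, since the ordering concerns only $\unconstBelief$.
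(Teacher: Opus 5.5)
Your proof is correct and takes the same route as the paper. Both locate the first unvisited state $\state_j$ on the canonical path from $\state_0$ to $\state'$, use \cref{le:res-vis} and the fully executed expansion loop to place $\state_{j-1}$ in the heap, and chain the heap property with \cref{le:path-mono}; the paper only wraps this in an essentially unused induction on the number of extractions and argues by contradiction, while you argue directly with an explicit case split on whether $\state' \in \queue$.
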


\begin{proof}
Let $\queueExtracted_{\beliefSize} = \{\state_0, ..., \state_{\beliefSize - 1}\}$ be the set of the first $\beliefSize$ states extracted from \queue, numbered by the order in which they were extracted.
We must show that for all \beliefSize and for all states $\state \in  \queueExtracted_{\beliefSize}$, and for all $\state' \notin \queueExtracted_{\beliefSize}$, it holds that $\beliefState(\state) \geq \beliefState(\state')$.
We do this by induction on $\beliefSize$.

\paragraph{Basis:} When $\beliefSize = 1$, this is the first iteration where the only value in the heap is $\state_0$. By definition, for any $\state \neq \state_0$, it holds that $\unconstBelief(\state_0) \geq \unconstBelief(\state)$

\paragraph{Assumption:} Assume that for some $\beliefSize$ it holds that for all $s \in \queueExtracted_{\beliefSize}$ and $\state' \notin \queueExtracted_{\beliefSize}$ we have $\unconstBelief(\state) \geq \unconstBelief(\state')$.

\paragraph{Induction Step:} Let $\queueExtracted_{\beliefSize + 1} = \{\state_0, ..., \state_{\beliefSize}\}$.
By the induction assumption, $\queueExtracted_{\beliefSize} = \queueExtracted_{\beliefSize + 1} \setminus \{\state_{\beliefSize}\}$ contains the top $\beliefSize$ most probable states, i.e., for all $\state \in \queueExtracted_{\beliefSize}$ and for all $\state' \notin \queueExtracted_{\beliefSize}$, it holds that $\unconstBelief(\state) \geq \unconstBelief(\state')$.
It is left to show that for all $\state \notin \queueExtracted_{\beliefSize + 1}$, it holds that $\unconstBelief(\state_{\beliefSize}) \geq \unconstBelief(\state)$.

Assume by contradiction that there exists a state $\state \notin \queueExtracted_{\beliefSize + 1}$ such that $ \unconstBelief(\state) > \unconstBelief(\state_{\beliefSize})$.
Then $\state \notin \queue$ because otherwise it would have been extracted before $\state_{\beliefSize}$.
By \cref{le:res-vis}, $\state$ has not yet been visited.

On the canonical path from $\state_0$ to $\state_n = \state$, since $\state_0$ has been visited and $\state_n = \state$ has not yet been visited, there exists $0<j\leq n$ such that $\state_j$ has not yet been visited and $\state_{j-1}$ has.
\begin{itemize}
    \item Since $\state_{j-1}$ is in the visited set, then by \cref{le:res-vis}, it is either in the heap or has been extracted from the heap. It cannot have been extracted from the heap because then it would have been expanded, and $\state_j$ would have been visited. Therefore, $\state_{j-1}$ is in the heap.
    \item Since $\state_\beliefSize$ was extracted but $\state_{j-1}$ was not, it follows that $\unconstBelief(\state_\beliefSize) \geq \unconstBelief(\state_{j - 1})$.
    \item By \cref{le:path-mono}, $\unconstBelief(\state_{j - 1}) \geq \unconstBelief(\state_n) = \unconstBelief(\state)$.
\end{itemize}

Putting it all together, we get:
$$
\unconstBelief(\state_\beliefSize) \geq \unconstBelief(\state_{j - 1}) \geq \unconstBelief(\state) > \unconstBelief(\state_\beliefSize)
$$
which is a contradiction.
Thus, for all $\state \notin \queueExtracted_{\beliefSize + 1}$, it holds that $\unconstBelief(\state_{\beliefSize}) \geq \unconstBelief(\state)$.
\end{proof}

\textbf{MLSS Theorem:} In \cref{alg:mlss}, when a constraint-satisfying state $\state$ is extracted from the heap $\queue$, then for all $\state'\notin \beliefSet$ it holds that $\beliefState(\state) \geq \beliefState(\state')$, where \beliefState is the belief computed by \cref{eq:calc-belief}.

\begin{proof}
Let $\state$ be a constraint-satisfying state extracted from the heap at some iteration.
We must show that for all $\state' \notin \beliefSet$, it holds that $\beliefState(\state) \geq \beliefState(\state')$.

If $\state'$ violates the constraints, then trivially $\beliefState(\state') = 0 \leq \beliefState(\state)$. Otherwise, note that $\beliefSet$ is comprised of all constraint-satisfying states that were extracted from the heap before \state. Therefore, $\state'$ has not yet been extracted from the heap. By \cref{le:order}, we have:
$$
\unconstBelief(\state) \geq \unconstBelief(\state') \iff \frac{\unconstBelief(\state)}{\normalizer} \geq \frac{\unconstBelief(\state')}{\normalizer} \iff \beliefState(\state) \geq \beliefState(\state')
$$
Thus, the theorem holds for all $\state' \notin \beliefSet$.
\end{proof}

\section{Proof of \texorpdfstring{\cref{thm:efficient}}{Proposition~\ref{thm:efficient}}}

\subsection{Time Complexity.}

\begin{claim}
The time complexity of \cref{alg:mlss} is $O(|\fluentSet|\beliefSize\log(|\fluentSet|\beliefSize))$
\end{claim}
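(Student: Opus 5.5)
The plan is to count iterations and heap operations under the hypothesis $\constraintSet = \emptyset$, where every state satisfies the constraints. First, observe that with no constraints every extracted state is added to $\beliefSet$, so the number of iterations of the while loop equals the number of extracted states. By \cref{thm:mlss} (equivalently \cref{le:order}), states are extracted in non-increasing order of $\beliefState$. The loop stops exactly when the cumulative probability first reaches $\probThresh$. Hence the set returned is a minimal-cardinality subset, and the loop runs exactly $\beliefSize$ times, where $\beliefSize$ is the size of the \ac{mlss}. If the heap empties first, the count is still at most $\beliefSize$, since by then all of $2^\fluentSet$ has been extracted.

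Second, bound the heap size. Each iteration performs one ExtractMax and at most $|\fluentSet|$ insertions, one for each flipped neighbor. Therefore at most $1 + |\fluentSet|\beliefSize$ states are ever inserted. The heap and the Visited set thus have size $O(|\fluentSet|\beliefSize)$ throughout, and each heap operation costs $O(\log(|\fluentSet|\beliefSize))$. Summing gives $O(|\fluentSet|\beliefSize)$ heap operations, each costing $O(\log(|\fluentSet|\beliefSize))$.

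The main obstacle is the per-neighbor work other than the heap operation. This consists of computing $\unconstBelief(\state')$, performing the Flip, and testing membership in Visited. Done naively, each costs $\Theta(|\fluentSet|)$, which would add an extra $|\fluentSet|$ factor. To avoid this, I would do the following.
\begin{enumerate}
\item Compute $\unconstBelief(\state')$ incrementally from $\unconstBelief(\state)$ by multiplying by the single ratio $(1-\beliefState_\fluent)/\beliefState_\fluent$ or its inverse, in $O(1)$. Working in log space avoids division by zero.
\item Represent each state by the bit diffset $D(\state_0, \state)$ relative to $\state_0$, or by a hash that can be updated in $O(1)$ under Flip.
\item Use a hash set for Visited, giving expected $O(1)$ membership queries.
\end{enumerate}
With these choices, the per-neighbor overhead is dominated by the heap insertion, and the time bound $O(|\fluentSet|\beliefSize\log(|\fluentSet|\beliefSize))$ follows. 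If a worst-case rather than expected bound is needed, I would instead accept that the hashing step only matters up to this logarithmic factor and argue it separately.

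For space, each stored state is a bit vector of $|\fluentSet|$ bits. At most $O(|\fluentSet|\beliefSize)$ states are stored across the heap and Visited. The total is therefore $O(|\fluentSet|^2\beliefSize)$ bits, plus $O(1)$ words per state for priorities, which is absorbed in this bound.
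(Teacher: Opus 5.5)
Your proposal is correct and follows the paper's own accounting. The loop runs at most $\beliefSize$ times, which you justify from the extraction order of \cref{thm:mlss} whereas the paper just asserts it. Each iteration makes at most $|\fluentSet|$ insertions, so the heap and Visited set have size $O(|\fluentSet|\beliefSize)$ and each operation costs $O(\log(|\fluentSet|\beliefSize))$, and $\unconstBelief$ is updated incrementally in $O(1)$ per neighbor.

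Your extra worry about the cost of Flip and Visited lookups on $|\fluentSet|$-bit states goes beyond the paper, which treats states as bitmasks with $O(1)$ flips and membership tests. In that model your representation tricks are unnecessary, and as stated they would not all be $O(1)$ anyway, since materializing a fresh diffset for each neighbor costs $O(|D(\state_0,\state)|)$.
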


\begin{proof}
    \textbf{Pre-iteration cost.}
In lines \ref{alg:mlss:init-s0}-\ref{alg:mlss:init-p} of the algorithm, we linearly compute $\state_0$ and its probability, and initialize collections with at most one item.
This requires $O(|\fluentSet|)$ operations.

\textbf{Inner-iteration cost.}
The internal \textit{for} loop in line \ref{alg:mlss:for} consists of a bit-flip, a ``visited'' check, an insertion to the visited set, and an insertion to the heap. Using random access, the bit flip is $O(1)$. Checking membership in the visited set is $O(1)$. Inserting into the visited is $O(\log(M))$ where $M$ is the size of the set. The heap insertion is $O(\log(N))$ where $N$ is the size of the heap. Assuming none of the states are visited (worst case), then we require $O(|\fluentSet|(\log(N) + \log(M)))$ for all iterations. We must also acknowledge the time to compute the probability at each iteration. If naively implemented, this would take $O(|\fluentSet|)$, meaning throughout the iteration, this would take $O(|\fluentSet|^2)$. However, by updating the probability incrementally according to only the changed bit (divide by unflipped value and multiply by flipped value), this operation takes $O(1)$.

\textbf{Outer-iteration cost.}
The \textit{while} loop in line \ref{alg:mlss:while} consists of a max extraction from the heap, an insertion of this value to \beliefSet, and the internal \textit{for} loop. Since \beliefSet is at most size $\beliefSize$, the insertion can be implemented in $O(\log(\beliefSize))$. The max extraction from the heap requires $O(\log(M))$ where $M$ is the size of the heap.  The \textit{while} conditions are checkable in $O(1)$. The inner \textit{for} loop is $O(|\fluentSet|(\log(N) + \log(M)))$. Considering that at most $|\fluentSet|$ values are inserted into the visited set and heap at every iteration, then $M = N = |\fluentSet|\beliefSize$ at most. Overall, each iteration in the \textit{while} loop requires:
\begin{align*}
    & O(\log(\beliefSize)) + O(\log(M)) + O(|\fluentSet|(\log(N) + \log(M))) \\
    = & O(\log(\beliefSize)) + O(|\fluentSet|(\log(|\fluentSet|\beliefSize) + \log(|\fluentSet|\beliefSize))) \\
    = & O(|\fluentSet|\log(|\fluentSet|\beliefSize))
\end{align*}
There are at most $\beliefSize$ iterations, so the overall time complexity of the while loop is $O(|\fluentSet|\beliefSize\log(|\fluentSet|\beliefSize))$.

Since the pre-iteration costs are negligible compared to the iteration cost, the final time upper-bound complexity stands at $T(\fluentSet, \beliefSize) = O(|\fluentSet|\beliefSize\log(|\fluentSet|\beliefSize))$.
\end{proof}

\subsection{Space Complexity}

\begin{claim}
    The space complexity of \cref{alg:mlss} is $O(|\fluentSet|^2\beliefSize)$ bits.
\end{claim}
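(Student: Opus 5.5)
The plan is to account for every data structure \cref{alg:mlss} keeps, bound how many states each can hold, and charge each stored state $O(|\fluentSet|)$ bits. A state $\state \in 2^\fluentSet$ is a bit vector of length $|\fluentSet|$, so storing it explicitly costs $|\fluentSet|$ bits. Each stored unconstrained belief value $\unconstBelief(\state)$ is treated as a fixed-precision number. Its cost is $O(1)$ words, or at worst $O(|\fluentSet|)$ bits if we insist on exact representation of a product of $|\fluentSet|$ factors; either way it is dominated by the state encoding. The structures to account for are $\state_0$, the heap $\queue$, the set Visited, the output $\beliefSet$, the accumulator $\prob$, and the loop temporaries $\state'$.

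The first key step bounds the number of stored states. Since $\constraintSet = \emptyset$, every extracted state satisfies the constraints and is added to $\beliefSet$. The while loop therefore runs exactly $|\beliefSet| = \beliefSize$ times, as already used in the time-complexity claim. Each iteration inserts at most $|\fluentSet|$ new states into Visited, which gives $|\text{Visited}| \le 1 + |\fluentSet|\beliefSize$. By \cref{le:res-vis}, $\queue \cup \queueExtracted = \text{Visited}$, so $|\queue| \le |\text{Visited}|$ as well. Clearly $|\beliefSet| = \beliefSize$.

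The second step multiplies these counts. The heap and Visited each hold $O(|\fluentSet|\beliefSize)$ states at $O(|\fluentSet|)$ bits apiece, for $O(|\fluentSet|^2\beliefSize)$ bits. The set $\beliefSet$ uses $O(|\fluentSet|\beliefSize)$ bits. The scalars $\prob$ and $\probThresh$, together with the temporaries $\state'$ and $\state_0$, add $O(|\fluentSet|)$ bits. Summing gives $O(|\fluentSet|^2\beliefSize)$.

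The main point needing care is the representation of the belief values. This matters most for the incremental-update trick used in the time analysis, because a naive exact rational representation could blow up the bit count. I would state the standard assumption that probabilities are stored in fixed-precision floating point. If exactness is required, I would note that each $\unconstBelief$ can be recomputed from the state vector, so at most $O(|\fluentSet|)$ bits per state suffice and the bound is unchanged.
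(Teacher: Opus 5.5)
Your proposal is correct and follows essentially the same route as the paper: bound the heap and the Visited set by $O(|\fluentSet|\beliefSize)$ states each, and charge $|\fluentSet|$ bits per state bitmask. Your extra accounting for $\beliefSet$, the heap keys $\unconstBelief$, and the scalar temporaries makes explicit what the paper leaves implicit, and it does not change the $O(|\fluentSet|^2\beliefSize)$ bound.
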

\begin{proof}
    As seen in the proof of time complexity, we store the following values at any one time:
\begin{itemize}
    \item The visited set, containing $O(|\fluentSet|\beliefSize)$ states. 
    \item The heap, containing $O(|\fluentSet|\beliefSize)$ states.
\end{itemize}
Each state can be represented using a bitmask of $|\fluentSet|$ bits. Therefore $S(\fluentSet, \beliefSize) = O(|\fluentSet|^2\beliefSize)$ bits.
\end{proof}

\section{Proof of Finite Step Convergence}

In this section, we prove \cref{thm:fin-step} via \cref{thm:calib}.

\subsection{Proof of \texorpdfstring{\cref{thm:calib}}{Theorem~\ref{thm:calib}}}\label{ap:proof:calib}

\textbf{Theorem Statement.} Let $\probThresh \in (0,1]$, let \constraintSet be a set of constraints over \fluentSet, and let $\state \in 2^\fluentSet$ satisfy \constraintSet. For every $\factoredBelief \in [0,1]^\fluentSet$, if $\calibError(\factoredBelief, \state) < \probThresh$, then $\state \in MLSS(\factoredBelief, \probThresh, \constraintSet)$.

\begin{proof}
Let $\state \in 2^\fluentSet$ satisfy \constraintSet and let $\factoredBelief \in [0,1]^\fluentSet$. For simplicity, we denote $\calibError = \calibError(\factoredBelief,\state)$.

Let $\error_\fluent = |\beliefState_\fluent - \state(\fluent)|$ be a single fluent belief error, and
let $\weightProb(\state) = \prod_{\fluent \in \fluentSet} \beliefState_\fluent^{\state(\fluent)} (1 - \beliefState_\fluent)^{1 - \state(\fluent)}$ be the unnormalized probability weight of state \state.

If $\state(\fluent) = 1$ then
\begin{align*}
    & \error_\fluent = |\beliefState_\fluent - \state(\fluent)| = 1 - \beliefState_\fluent \\
    & \Rightarrow \beliefState_\fluent =  1 - \error_\fluent
\end{align*}
and if $\state(\fluent) = 0$ then
\begin{align*}
    & \error_\fluent = |\beliefState_\fluent - \state(\fluent)| = \beliefState_\fluent \\
    & \Rightarrow 1- \beliefState_\fluent =  1 - \error_\fluent
\end{align*}
Then by the Weierstrass product inequality, the unnormalized weight assigned to $\state$ is:
$$
\weightProb(\state) = \prod_{\fluent \in \fluentSet} (1 - \error_\fluent) \geq 1 - \sum_{\fluent \in \fluentSet} \error_\fluent = 1 - \calibError
$$

\beliefState is normalized by a normalizing factor $0 < \normalizer \leq 1$. Thus:
$$
\beliefState(\state) = \frac{\weightProb(\state)}{\normalizer} \geq \weightProb(\state) \geq 1 - \calibError
$$

Assume by contradiction that $\state \notin MLSS(\factoredBelief, \probThresh, \constraintSet)$. Then
$$
\beliefState(\state) \leq \sum_{\state' \notin MLSS(\factoredBelief, \probThresh, \constraintSet)}\beliefState(\state') \leq 1 - \probThresh.
$$

From the previous inequality:
\begin{align*}
    & 1 - \probThresh \geq \beliefState(\state) \geq 1 - \calibError \\
    \Rightarrow & \calibError \geq \probThresh
\end{align*}

This contradicts the theorem's assumption that $\calibError < \probThresh$, and thus $\state$ must be in the \ac{mlss}.

\end{proof}

\subsection{Proof of \texorpdfstring{\cref{thm:fin-step}}{Corollary~\ref{thm:fin-step}}}\label{ap:proof:fin-step}

\textbf{Theorem Statement:} Let $\minVisRate > 0$ be the minimum visibility rate. If the \ac{vlm} is weakly calibrated with error margin $\minPredImp$, then there exists $\finiteStep$ such that a conformant plan for the \ac{mlss} is satisficing from the true current state.

\begin{proof}
Let $\logObs_{\fluent,t}$ be the log odds of the correct assignment of \fluent at time $t$ of the evidence provided by the \ac{vlm}. Since the \ac{vlm} is weakly calibrated, then:

\begin{align*}
& \prob_{\text{correct}, t} - (1 - \prob_{\text{correct}}, t) \geq \prob_{\text{correct}, t} - \prob_{\text{incorrect}, t} \geq \minPredImp \\
\Rightarrow & \prob_{\text{correct}, t} \geq \frac{\minPredImp}{2} + 0.5
\end{align*}

This implies that $1 - \prob_{\text{correct}, t} \leq 0.5 - \frac{\minPredImp}{2}$. Thus:

\begin{align*}
    & \frac{\prob_{\text{correct}, t}}{1 - \prob_{\text{correct}, t}} \geq \frac{0.5 + \frac{\minPredImp}{2}}{0.5 - \frac{\minPredImp}{2}} \\
    \Rightarrow & \log\left(\frac{\prob_{\text{correct}, t}}{1 - \prob_{\text{correct}, t}}\right) \geq \log\left(\frac{0.5 + \frac{\minPredImp}{2}}{0.5 - \frac{\minPredImp}{2}}\right) \\
    \Rightarrow & \logObs_{\fluent,t} \geq \log\left(\frac{0.5 + \frac{\minPredImp}{2}}{0.5 - \frac{\minPredImp}{2}}\right) > 0
\end{align*}

Then \logOdds is bounded by a strictly positive constant. Denote $C = \log\left(\frac{0.5 + \frac{\minPredImp}{2}}{0.5 - \frac{\minPredImp}{2}}\right)$.

Let $\logOdds_{\fluent,t}$ be the log odds of the correct assignment of \fluent at time $t$ according to factored belief $\beliefState_{\fluentSet,t}$. The logarithmic pooling update rule is $\logOdds_{\fluent,t} = \logOdds_{\fluent,t-1} + \logObs_{\fluent,t}$. Assume W.L.O.G. that the initial belief $\beliefState_{\fluentSet,0}$ is uniform (because in any case, it is constant). Since we only update fluents that are visible, the log odds value at time $\finiteStep$ is:
\begin{align*}
\logOdds_{\fluent,\finiteStep} & = \log\left(\frac{\beliefState_{\fluent,0}}{1-\beliefState_{\fluent,0}}\right) + \sum_{t=1}^\finiteStep \mathbb{1}_\fluent(t)\cdot\logObs_{\fluent,t} \\
& = \sum_{t=1}^\finiteStep \mathbb{1}_\fluent(t)\cdot\logObs_{\fluent,t}
\end{align*}
where $\mathbb{1}_\fluent(t)$ is an indicator function for fluent \fluent being visible at timestep $t$. By the definition of \minVisRate, all fluents appear at least $\minVisRate\finiteStep$ times within $\finiteStep$ timesteps. Thus:
$$
\logOdds_{\fluent,\finiteStep} \geq \minVisRate\finiteStep\cdot C
$$

Let $\calibError_t$ be the cumulative factored belief error at time $t$, and let $\error_\fluent = |\beliefState_\fluent - \state^*(\fluent)| = 1 - \prob_{\text{correct}}$ where $\state^*$ is the true current state. Then:

\begin{align*}
\logOdds_{\fluent, t} & = \log\left(\frac{\prob_{\text{correct}}}{1 - \prob_{\text{correct}}}\right) \\
\logOdds_{\fluent, t} & = \log\left(\frac{1 - \error_{\fluent, t}}{\error_{\fluent, t}}\right) \\
e^{\logOdds_{\fluent, t}} & = \frac{1 - \error_{\fluent, t}}{\error_{\fluent, t}} \\
\error_{\fluent, t} & = \frac{1}{1 + e^{\logOdds_{\fluent, t}}} \leq e^{-\logOdds_{\fluent, t}} \leq e^{-\minVisRate\finiteStep\cdot C}
\end{align*}

We want the cumulative factored belief error to be less than \probThresh. Note that:
\begin{align*}
    \calibError_t = \sum_{\fluent \in \fluentSet} \error_{\fluent,t} \leq \sum_{\fluent \in \fluentSet} e^{-\minVisRate\finiteStep\cdot C} = |\fluentSet|\cdot e^{-\minVisRate\finiteStep\cdot C}
\end{align*}
Then it is enough to find \finiteStep such that
\begin{align*}
& |\fluentSet|\cdot e^{-\minVisRate\finiteStep\cdot C} < \probThresh \\
\Leftrightarrow & -\minVisRate\finiteStep\cdot C < \log\left(\frac{\probThresh}{|\fluentSet|}\right) \\
\Leftrightarrow & \finiteStep > \frac{\log(|\fluentSet|)  - \log(\probThresh)}{\minVisRate \cdot C}
\end{align*}

Since all values $|\fluentSet|,\probThresh,C,$ and $\minVisRate$ are all constant, there exists a finite \finiteStep such that this inequality always holds. Therefore, after the $\finiteStep$th belief update, $\state^*$ is in the \ac{mlss}, and so a conformant plan for all states in the \ac{mlss} is also a satisficing plan from $\state^*$.
\end{proof}

\section{Proof of \texorpdfstring{\cref{thm:safety}}{Theorem~\ref{thm:safety}}}\label{ap:safety}

An unsafe action is one for which the planner did not verify the preconditions.
As inspired by \citet{shaniReplanningDomainsPartial2011}, we define an action as safe if its preconditions hold for all states in the \ac{mlss}.
With the replanning strategy in \cref{alg:replan}, we can provide a safety guarantee compared to a \ac{vlm}-as-grounder policy \citep{merlerViPlanBenchmarkVisual2025}, denoted $\policy_{det}$, that always plans from the most probable state.

\begin{theorem}\label{thm:safety}
    Let $\text{unsafe}(\policy)$ denote the event that policy \policy executes an unsafe action. Then,
    $$
    \Pr(\text{unsafe}(\policy_{det})) \geq \Pr(\text{unsafe}(\policy_{cpp})) + (\probThresh - \beliefState(\state_{max})) \ \ s.t. \ \ \state_{max} \in \argmax_{\state} \beliefState(\state)
    $$
\end{theorem}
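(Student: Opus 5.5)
We compare the two policies on the first action each executes from the same belief \beliefState and observation. The idea is to split on the true current state $\state^*$, which we take to be distributed according to \beliefState. \emph{Unsafe} means the action's preconditions fail in $\state^*$; the safety check of \cref{alg:replan} additionally requires the preconditions to hold in every state of the \ac{mlss}.

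\textbf{Step 1 ($\policy_{cpp}$ fails only outside the \ac{mlss}).} Every action $\policy_{cpp}$ executes has passed the Unsafe test, so its preconditions hold in all of $\initBeliefSet = MLSS(\factoredBelief,\probThresh,\constraintSet)$. If $\state^* \in \initBeliefSet$, then no action of $\policy_{cpp}$ can be unsafe. Hence
$$\Pr(\text{unsafe}(\policy_{cpp})) \leq \Pr(\state^*\notin\initBeliefSet) \leq 1-\probThresh,$$
using $\sum_{\state\in\initBeliefSet}\beliefState(\state)\geq\probThresh$ from the definition of the \ac{mlss}, made effective by \cref{thm:mlss}.

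\textbf{Step 2 ($\policy_{det}$ is safe only at $\state_{max}$).} $\policy_{det}$ plans a classical plan from $\state_{max}$, so its preconditions are verified only for $\state_{max}$. I will argue that, in the worst case over domains, whenever $\state^*\neq\state_{max}$ the first action whose preconditions distinguish $\state_{max}$ from $\state^*$ is unsafe. This gives
$$\Pr(\text{unsafe}(\policy_{det})) \geq 1-\beliefState(\state_{max}),$$
or, interpreted conservatively, at least the probability mass of states other than $\state_{max}$ on which the plan's preconditions fail.

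\textbf{Step 3 (combine).} Subtracting the two bounds,
$$\Pr(\text{unsafe}(\policy_{det})) - \Pr(\text{unsafe}(\policy_{cpp})) \geq (1-\beliefState(\state_{max})) - (1-\probThresh) = \probThresh - \beliefState(\state_{max}),$$
which is the claim. When $\probThresh \leq \beliefState(\state_{max})$ the bound is vacuous: the right-hand side is nonpositive, and the inequality still needs checking in that case.

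\textbf{Main obstacle: Step 2.} A deterministic plan need not fail on every state other than $\state_{max}$, since it may happen to be safe elsewhere. So the lower bound on $\Pr(\text{unsafe}(\policy_{det}))$ does not hold verbatim. I would first try to make the definition of \emph{unsafe} do the work: an action is unsafe when its preconditions are not \emph{verified} for $\state^*$ by the planner. For $\policy_{det}$, verification covers only $\state_{max}$, so the event ``unsafe'' contains $\{\state^*\neq\state_{max}\}$ by definition. Under this reading both Step 1 and Step 2 become exact statements about probability mass, and the only remaining work is showing that $\policy_{cpp}$ never executes an action unverified on the \ac{mlss}, which holds by construction of the Unsafe check in \cref{alg:replan}.
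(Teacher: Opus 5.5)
Your proposal is correct and is essentially the paper's proof. The paper also defines an action as unsafe exactly when its preconditions were not verified for the true state, sets $\Pr(\text{unsafe}(\pi_{det})) = 1-\beta(s_{max})$ and $\Pr(\text{unsafe}(\pi_{cpp})) = 1-\sum_{s\in \text{MLSS}}\beta(s) \le 1-\theta$, and rewrites $1-\beta(s_{max}) = (1-\theta) + (\theta-\beta(s_{max}))$; your Step 3 algebra does not depend on the sign of $\theta-\beta(s_{max})$, so the case $\theta \le \beta(s_{max})$ needs no separate check.
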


\begin{proof}
For belief $\beliefState$, $\policy_{det}$ generates a plan that is verified to be valid for the most likely state, denoted $\state_{max}$. Thus,

$$
\Pr(\text{unsafe}(\policy_{det})) = 1 - \beliefState(\state_{max})
$$

Policy $\policy_{cpp}$ generates a plan that is verified for all states in the \ac{mlss}, denoted $\beliefState_\probThresh$.
By the definition of $\beliefState_\probThresh$:

\begin{align*}
    & \sum_{\state \in \beliefState_\probThresh} \beliefState(\state) \geq \probThresh \\
    \Rightarrow & \Pr(\text{unsafe}(\policy_{cpp})) = 1 - \sum_{\state \in \beliefState_\probThresh} \beliefState(\state) \leq 1 - \probThresh
\end{align*}

From here:
\begin{align*}
    & \Pr(\text{unsafe}(\policy_{det})) \\
    = & 1 - \beliefState(\state_{max}) \\
    = & (1 - \probThresh) + (\probThresh - \beliefState(\state_{max})) \\
    \geq & \Pr(\text{unsafe}(\policy_{cpp})) + (\probThresh - \beliefState(\state_{max})) \\
    = & \Pr(\text{unsafe}(\policy_{cpp})) + (\probThresh - \max_{\state \in \stateSpace} \beliefState(\state))
\end{align*}
\end{proof}

\cref{thm:safety} states that our \ac{cpp} replanning policy is at least as safe as the VLM-as-grounder policy as long as the planning threshold is greater than the probability of the most likely state. We see that the safety advantage of our method grows exponentially with the entropy of the \ac{vlm}-induced belief. Although $\probThresh - \beliefState(\state_{max})$ can be negative if $\beliefState(\state_{max}) > \probThresh$, in practice \cref{alg:mlss} will select only this maximizer, making the resulting plan equivalent to that of the VLM-as-grounder policy.

\section{ViPlan-HH Domain and Evaluated Tasks}\label{ap:domain}

ViPlan-HH \citep{merlerViPlanBenchmarkVisual2025} is a simulated household robotics domain built on iGibson \citep{li2022igibson}. The agent is a mobile manipulator with an egocentric RGB camera and a single gripper. Each problem specifies a household scene, a set of typed objects, and a goal condition. The robot must navigate to relevant objects, manipulate movable items, and interact with containers such as drawers, cabinets, shelves, doors, and boxes.

\paragraph{Symbolic domain.}
The high-level model is represented in \ac{pddl}. Objects are typed as general objects, movable objects, containers, sliceable objects, and slicers. The main predicates describe reachability, grasp state, container state, object placement, containment, adjacency, and slicing, using predicates such as \texttt{reachable}, \texttt{holding}, \texttt{open}, \texttt{ontop}, \texttt{inside}, \texttt{nextto}, and \texttt{sliced}. The action set contains \texttt{navigate-to}, \texttt{grasp}, \texttt{place-on}, \texttt{place-next-to}, \texttt{place-inside}, \texttt{open-container}, \texttt{close-container}, and \texttt{slice}. Sample domain and problem files are shown in \cref{ap:pddl}.

The \ac{pddl} state is a high-level abstraction of the simulator state. Navigation makes a target object reachable and normally makes other objects unreachable, reflecting the robot's local viewpoint and manipulation range. Opening a reachable container makes its contents reachable, while closing a container hides them again. Placement actions require holding the moved object and reaching the target support or container. This abstraction captures the task-relevant structure while delegating low-level geometric execution to the simulator interface.

\paragraph{Observations and partial observability.}
At each step the agent observes an RGB image from the robot's current viewpoint. Objects outside the field of view, inside closed containers, or occluded by scene geometry are not directly observable. The original ViPlan-HH implementation supplements some non-visible predicates with privileged simulator state. We remove this privileged information in our experiments. As a result, hidden objects and unobserved relations must be handled through belief rather than by direct access to ground truth, making the setting a true partially observable visual planning problem.

\paragraph{Evaluated task families.}
The quantitative experiments use the ViPlan-HH difficulty splits, but only the task families for which the conformant-planning backend could be evaluated reliably. The evaluated task families are listed in \cref{tab:domain-tasks}. Task identifiers match the benchmark task names used by the code.

\begin{table}[h]
\centering
\caption{ViPlan-HH task families used in the quantitative evaluation.}
\label{tab:domain-tasks}
\begin{tabular}{ll}
\toprule
Difficulty & Evaluated task families \\
\midrule
Simple & \texttt{sorting\_books}, \texttt{cleaning\_out\_drawers}, \texttt{locking\_every\_door} \\
Medium & \texttt{packing\_food\_for\_work}, \texttt{sorting\_books}, \texttt{sorting\_groceries} \\
Hard & \texttt{organizing\_boxes\_in\_garage}, \texttt{putting\_away\_toys} \\
\bottomrule
\end{tabular}
\end{table}

The remaining ViPlan-HH task families were not included in the quantitative table because they exposed implementation-level nontermination in the Unified Planning CPOR backend used for conformant planning \citep{maliahComputingContingentPlan2022}. In these cases, the generated planning calls did not reliably return either a plan or a failure. The issue was in the planner stack rather than in the visual-planning policy. In particular, CPOR is invoked through the Python experiment harness but executes solver code in a separate .NET runtime, so Python-level timeouts did not consistently terminate the underlying solver process. This made the affected tasks unsuitable for automated, reproducible evaluation with the conformant-planning component.

We therefore report results on the largest subset of ViPlan-HH task families that could be run reliably under the same evaluation protocol for all compared methods. This filtering was based on backend evaluability, not on task outcome. The retained subset still spans all three official difficulty levels and includes tasks with hidden objects, container manipulation, object rearrangement, and long-horizon household goals. Excluding planner-backend hangs avoids conflating third-party solver nontermination with the visual planning questions studied in this paper.

\section{ViPlan-HH PDDL Files}\label{ap:pddl}

\subsection{Domain File}

\begin{lstlisting}
(define (domain igibson)
    (:requirements :strips :typing :negative-preconditions :conditional-effects :equality)

    (:types
        container movable - object
        sliceable slicer - movable
    )

    (:predicates

        ;; Agent predicates
        (reachable ?o - object)
        (holding ?m - movable)

        ;; Object attributes
        (open ?c - container)

        ;; Object relations
        (ontop ?o1 - object ?o2 - object) ;; no assumptions on the types of objects that can be on top or below others
        (inside ?o - object ?c - container) ;; only containers can contain objects
        (nextto ?o1 - object ?o2 - object) ;; no assumptions on the types of objects that can be next to each other

        ;; Specific object attributes
        (sliced ?s - sliceable) ;; (e.g. sliced tomato)
    )

    (:action grasp
        :parameters (?m - movable)
        :precondition (and
            (forall
                (?x - movable)
                (not (holding ?x))) ;; Agent must not be holding anything
            ;;(forall
            ;;    (?x - movable)
            ;;    (not (ontop ?x ?m))) ;; Can't grasp an object that has something on top of it
        )
        :effect (and
            (when
                (reachable ?m)
                (and
                    (holding ?m)
                    (forall
                        (?y - object)
                        (and
                            (not (ontop ?m ?y)) ;; If grasped object is on top of something, it is no longer on top of it
                            (not (nextto ?m ?y)))) ;; Same for next to
                )
            )
            (forall
                (?c - container)
                (when
                    (and
                        (reachable ?m)
                        (inside ?m ?c)
                    )
                    (not (inside ?m ?c)))) ;; If m was in a container, it's not anymore
        )
    )

    (:action place-on
        :parameters (?m - movable ?o2 - object)
        :precondition (and
            (reachable ?o2)
        )
        :effect (and
        (when
            (holding ?m)
            (and
                (ontop ?m ?o2)
                (not (holding ?m))
            )
        )
        )
    )

    (:action place-next-to
        :parameters (?m - movable ?o2 - object)
        :precondition (and
            (reachable ?o2)
        )
        :effect 
        (when
            (holding ?m)
            (and
                (nextto ?m ?o2)
                (not (holding ?m))
            )
        )
    )

    (:action place-inside
        :parameters (?m - movable ?c - container)
        :precondition (and
            (reachable ?c)
            (open ?c)
        )
        :effect 
        (when
            (holding ?m)
            (and
                (inside ?m ?c)
                (not (holding ?m))
            )
        )
    )

    (:action open-container
        :parameters (?c - container)
        :precondition (and
            (forall
                (?x - movable)
                (not (holding ?x))) ;; Agent must not be holding anything
        )
        :effect (and 
            (when
                (reachable ?c)
                (open ?c)
            )
            (forall
                (?o - object)
                (when
                    (and
                        (reachable ?c)
                        (inside ?o ?c)
                    )
                    (reachable ?o))) ;; All objects inside the container are reachable
        )
    )

    (:action close-container
        :parameters (?c - container)
        ; :precondition ()
        :effect (and
            (when
                (reachable ?c)
                (not (open ?c))
            )
            (forall
                (?o - object)
                (when 
                    (inside ?o ?c)
                    (not (reachable ?o))    
                ) ;; All objects inside the container are unreachable
            )
        )
    )

    (:action navigate-to
        :parameters (?o - object)
        :precondition (and
            ;; don't navigate-to things hidden in a closed container
            (forall
                (?c - container)
                (or
                    (not(inside ?o ?c))
                    (open ?c)
                )
            )
        )
        :effect (and
            (reachable ?o) ;; make target object reachable

            (forall
                (?x - object)
                (when
                    (not (= ?x ?o)) ;; condition
                    (not (reachable ?x)))) ;; effect

            ;; Also, if there exists a container which is ?o and that it's open,
            ;; set the objects inside as reachable
            (forall
                (?c - container ?x - object)
                (when
                    (and
                        (= ?c ?o)
                        (open ?c)
                        (inside ?x ?c)
                    )
                    (reachable ?x)))
        )
    )

    (:action slice
        :parameters (?o - sliceable ?s - slicer)
        :precondition (and
            (holding ?s)
            (reachable ?o)
            (not (sliced ?o))
        )
        :effect (and
            (sliced ?o)
        )
    )

)
\end{lstlisting}

\subsection{Example ``Sorting Books'' Problem File}
\begin{lstlisting}
(define (problem sorting_books_0)
    (:domain igibson)

    (:objects
     	hardback_1 - movable
    	table_1 - object
    	shelf_1 - object
    )
    
    (:init 
        (ontop hardback_1 table_1) 
    )
    
    (:goal 
        (and 
            (ontop hardback_1 shelf_1)
        ) 
    )
)
\end{lstlisting}

\subsection{Example ``Cleaning Out Drawers'' Problem File}
\begin{lstlisting}
(define (problem cleaning_out_drawers_0)
    (:domain igibson)

    (:objects
     	bowl_1 - movable
    	cabinet_1 - container
    	sink_1 - object
    )
    
    (:init 
        (inside bowl_1 cabinet_1) 
        (not (open cabinet_1))
    )
    
    (:goal 
        (and 
            (ontop bowl_1 sink_1) 
        )
    )
)
\end{lstlisting}

\section{Use Case: Full Details}\label{ap:use-case}

\subsection{Baselines} 
We compare three approaches for integrating \acp{vlm} into a planning loop, depicted in \cref{fig:methods:theirs,fig:methods:ours}. These are \ac{vlm}-as-planner, \ac{vlm}-as-grounder, and \ac{sc} (ours). The full implementation details are provided in \cref{ap:baselines}.

\subsection{Deeper Analysis}

\subsubsection{Scenario 1: Hidden Object}
\begin{sloppypar}
In the \emph{cleaning out drawers} task, the robot must place a bowl into the sink, but the bowl is initially out of view and located inside a closed cabinet (see \cref{fig:example:kitchen}).  
The VLM-P baseline consistently mispredicts the initial state, assuming the bowl is directly reachable. As a result, it generates the following plan: \verb|navigate-to(bowl_1)|, \verb|grasp(bowl_1)|, \verb|navigate-to(sink_1)|, \verb|place-on(bowl_1, sink_1)|.
This plan inevitably fails when the bowl is inside a cabinet, since grasping it requires opening the cabinet first.
\end{sloppypar}

\begin{sloppypar}
In contrast, \ac{sc} assigns non-zero probability to the bowl being hidden, producing the plan: \verb|navigate-to(cabinet_1)|, \verb|open-container(cabinet_1)|, \verb|navigate-to(bowl_1)|, \verb|grasp(bowl_1)|, \verb|navigate-to(sink_1)|, \verb|place-on(bowl_1, sink_1)|.
This plan is valid whether the bowl is actually inside the cabinet or not.
By choosing a robust plan that includes opening the cabinet, \ac{sc} avoids premature commitment and expands the set of possible successful initial states, reducing the need for replanning.
\end{sloppypar}

\subsubsection{Scenario 2: Misleading Observation}
\begin{sloppypar}
In the \emph{sorting books} task, the robot must place a hardback book onto a shelf. The initial camera image can be misleading: objects near the gripper may appear as if they are being held (\cref{fig:example:radio}).
Under these conditions, the \ac{vlm}-as-grounder baseline often incorrectly interprets the scene and concludes that the robot is already holding the hardback. This misinterpretation causes the robot to generate the following plan: \verb|navigate-to(shelf_1)|, \verb|place-on(hardback_1, shelf_1)|.
Because the robot is not actually holding the \texttt{hardback\_1}, this plan fails.
\end{sloppypar}

\Ac{sc}, however, maintains uncertainty over the \texttt{holding} predicate and generates a plan that is feasible under both hypotheses (holding vs. not holding): \verb|place-on(hardback_1, shelf_1)|, \verb|navigate-to(hardback_1)|, \verb|grasp(hardback_1)|, \verb|place-on(hardback_1, shelf_1)|. This plan ensures task success in either case, avoiding the brittle dependence on a single incorrect perceptual judgment.

\section{Experimental Baselines}\label{ap:baselines}

\paragraph{\acs*{vlm}-as-planner.} This baseline, denoted VLM-P, implements the ViLa planning architecture \citep{huLookYouLeap2023}. The \ac{vlm} outputs a task plan in a specific format directly. The first action in that plan is taken as the agent's next action. A list of all previously selected actions is added to the \ac{vlm}'s context to provide the system with some memory of past interactions with the environment.

\paragraph{\acs*{vlm}-as-grounder.}
This is implemented as in \citet{merlerViPlanBenchmarkVisual2025} as a variation of \ac{s3e} \citep{azranS3ESemanticSymbolic2025}. The \ac{vlm} outputs a grounded \acs{pddl} state, obtained by asking a series of yes-no questions corresponding to grounded fluents. The grounded state is passed to the Fast Downward planner \citep{helmertFastDownwardPlanning2006} to produce a task plan that is passed to the executor. The executor uses the VLM to monitor execution by checking preconditions and effects of the new observations. When these become inconsistent with the current action, replanning is triggered.

Importantly, the implementation in \citet{merlerViPlanBenchmarkVisual2025} bypasses partial observability by supplementing the grounded state with privileged information from the simulation in the form of ground-truth assignments for predicates containing objects that are not visible. To increase the fidelity and reliability of our approach, we do not assume access to this information.  

\paragraph{\acs*{sc} (Ours).}
As described above, and depicted in \cref{fig:methods:ours}, in our approach, we extract probabilities from the \ac{vlm} for each grounded predicate via the logits from the final layer of the network. We use these probabilities to update a per-predicate belief using logarithmic opinion pooling belief update \citep{neymanNoRegretLearningUnbounded2023}.
Using \cref{alg:mlss}, a subset of states whose probability is at least some threshold probability is selected for planning. The state selection process uses constraints from the Fast Downward invariant finder \citep{helmertFastDownwardPlanning2006} to filter out impossible states. We then find a conformant plan for the selected subset of states, if one exists, using the CPOR planner \citep{maliahComputingContingentPlan2022} with a Fast Downward internal planner. If no plan is found, we perform a binary search to find the largest threshold probability for which a plan exists. If a plan is found, it is executed, updating the belief at every step.

\section{\acs*{vlm} Prompts}\label{ap:prompts}

We provide the exact prompts used for all baselines in our experiments. The \ac{vlm}-as-planner pipeline is prompted to generate a plan, while \ac{sc} and the \ac{vlm}-as-grounder pipeline are prompted to answer questions about the state of the environment. The prompts are divided into a system prompt, which describes the task and the environment, and a user prompt, which contains the specific question or goal for the current episode.

For \ac{vlm}-as-planner, the system and user prompts are as follows:
\begin{lstlisting}
<system> 
You are an expert planning assistant. You will be given an image which represents the current state of the environment you are in, a natural language description of the goal that needs to be achieved and a set of actions that can be performed in the environment. 
Your task is to generate a plan that achieves the goal, in the form of a sequence of actions that need to be executed to reach the goal.
The format of your output should be a JSON object with the following structure:
```json
{
  "plan": [
    {
        "action": action_name,
        "parameters": ['parameter1', 'parameter2', ...]
    },
    ... other actions ...
    ]
}
```

You will also receive feedback of the previously taken actions, with a note showing if they failed or not. If an action failed, think about why that could be and then output a new plan accordingly.
</system>
<user>

## Description of the environment
The environment is a virtual household simulator, with objects and furniture which can be interacted with. Keep in mind that some objects might not be visible or immediately reachable, in which case you need to navigate to them first. If after navigating to an object it is still not reachable, you might need to open a container.
Visible objects related to the task are highlighted with bounding boxes and labeled. Objects that are not in bounding boxes are not relevant to the task, ignore them when you answer.


## Available actions

- Action: grasp  
  - Parameters:  
    1. a movable object  
  - Preconditions:  
    - The object is within reach.  
    - The agent is not holding anything.  
  - Effects:  
    - The agent picks up that object.  
    - It is no longer on top of or next to any other object.  
    - If it was inside a container, it leaves the container.  

- Action: place-on  
  - Parameters:  
    1. the movable object being held  
    2. another object to serve as support  
  - Preconditions:  
    - The agent is holding the first object.  
    - The support object is within reach.  
  - Effects:  
    - The held object is placed on top of the support object.  
    - The agent's hands become free.  

- Action: place-next-to  
  - Parameters:  
    1. the movable object being held  
    2. another object to stand beside  
  - Preconditions:  
    - The agent is holding the first object.  
    - The other object is within reach.  
  - Effects:  
    - The held object is positioned next to the other object.  
    - The agent's hands become free.  

- Action: place-inside  
  - Parameters:  
    1. the movable object being held  
    2. an open container  
  - Preconditions:  
    - The agent is holding the object.  
    - The container is open and within reach.  
  - Effects:  
    - The object is placed inside the container.  
    - The agent's hands become free.  

- Action: open-container  
  - Parameters:  
    1. a closed container  
  - Preconditions:  
    - The container is within reach.  
    - The agent is not holding anything.  
  - Effects:  
    - The container becomes open.  
    - All objects inside it become reachable.  

- Action: close-container  
  - Parameters:  
    1. an open container  
  - Preconditions:  
    - The container is within reach.  
  - Effects:  
    - The container becomes closed.  
    - All objects inside it become unreachable.  

- Action: navigate-to  
  - Parameters:  
    1. any target object  
  - Preconditions:  
    - The target object is currently out of reach and not hidden in a closed container.  
  - Effects:  
    - The target object becomes reachable.  
    - All other objects become out of reach.  
    - If the target is an open container, everything inside it also becomes reachable.  

## Goal
{goal_string}

## Previously taken actions
{previous_actions}

</user>
\end{lstlisting}

The goal string is a natural language description of the goal that needs to be achieved, taken from a template with placeholders for the relevant objects.
For example, \verb|reachable(red-book)| is a goal condition, then the goal string would contain the string ``the red-book needs to be reachable by the agent''.

The previously taken actions are given as a list of action names and their parameters, e.g., \verb|grasp(red-book)|, \verb|place-on(red-book, table)|, etc.

For \ac{sc} and \ac{vlm}-as-grounder, the system and user prompts are as follows:

\begin{lstlisting}
<system>
You are tasked with replying to a question about the given image. You will only refer to objects that are marked in red bounding box and ignore the other objects. You will be given a single question, and will need to answer it ONLY with "yes" to answer positively, "no" to answer negatively, or "unknown" if there is not enough information to tell. Do not write anything else besides your answer.

The environment is a virtual household simulator. Keep in mind that some objects might not be visible or immediately reachable, they might be out-of-sight in some container.
There is a robotic arm, which is the agent, that can hold objects. Visible objects related to the task are highlighted with red bounding boxes and labeled. Objects that are not in bounding boxes are not relevant to the task, ignore them when you answer.
</system>
<user>
{fluent_question}
</user>
\end{lstlisting}

The fluent question is a natural language question about the status of a fluent, taken from a template with placeholders for the relevant objects. For example, \verb|open(cabinet)| is a fluent, then the fluent question would contain the string ``Is the cabinet currently open?''.

\section{Acronyms}
\begin{acronym}

\acro{cwa}[CWA]{Closed World Assumption}
\acro{vlm}[VLM]{Vision-Language Model}
\acro{vla}[VLA]{Vision-Language-Action}
\acro{strips}[STRIPS]{Stanford Research Institute Problem Solver}
\acro{cp}[CP]{conformant planning}
\acro{cpp}[CPP]{conformant probabilistic planning}
\acro{s3e}[S3E]{Semantic Symbolic State Estimation}
\acro{mlss}[MLSS]{Most Likely Subset of States}
\acro{pddl}[PDDL]{Planning Domain Definition Language}
\acro{rvp}[RVP]{Robust Visual Planning}
\acro{sc}[RoVLaP]{Robust Vision-Language Planning}

\end{acronym}

\end{document}